\documentclass{article}

\usepackage[preprint]{neurips_2026}

\usepackage[utf8]{inputenc} % allow utf-8 input
\usepackage[T1]{fontenc}    % use 8-bit T1 fonts
\usepackage{hyperref}       % hyperlinks
\usepackage{url}            % simple URL typesetting
\usepackage{booktabs}       % professional-quality tables
\usepackage{amsfonts}       % blackboard math symbols
\usepackage{nicefrac}       % compact symbols for 1/2, etc.
\usepackage{microtype}      % microtypography
\usepackage{xcolor}         % colors

\usepackage{placeins}

\usepackage{amsthm}

\newtheorem{example}{Example}

\setcitestyle{square,numbers,comma}

\usepackage{amsmath,amssymb,amsthm,mathtools}
\usepackage{bm}
\usepackage{microtype}
\usepackage{graphicx}
\usepackage{booktabs}
\usepackage{multirow}
\usepackage{array}
\usepackage{enumitem}
\usepackage{hyperref}
\usepackage{url}
\usepackage{algorithm}
\usepackage{algorithmic}
\usepackage{tikz}
\usetikzlibrary{arrows.meta,positioning,fit,calc,decorations.pathreplacing}
\usepackage{pgfplots}
\pgfplotsset{compat=1.18}

\hypersetup{
  colorlinks=true,
  linkcolor=blue,
  citecolor=blue,
  urlcolor=blue
}

\newtheorem{proposition}{Proposition}

\theoremstyle{remark}

\newcommand{\E}{\mathbb{E}}

\newcommand{\eps}{\varepsilon}
\newcommand{\norm}[1]{\left\lVert #1 \right\rVert}

\usepackage{cleveref}
\usepackage{subcaption}

\usepackage{wrapfig}

\usepackage{amsmath,amssymb}
\usetikzlibrary{arrows.meta,calc,positioning}

\definecolor{tradblue}{RGB}{36,92,196}
\definecolor{tradlight}{RGB}{236,242,252}
\definecolor{tradfill}{RGB}{222,232,249}
\definecolor{ourgreen}{RGB}{24,120,80}
\definecolor{ourlight}{RGB}{236,245,240}
\definecolor{ourfill}{RGB}{222,237,228}
\definecolor{textgray}{RGB}{68,73,80}
\definecolor{manifoldblue}{RGB}{193,211,246}
\definecolor{manifoldgreen}{RGB}{196,224,206}

\title{Conservative Flows: A New Paradigm of Generative Models}

\author{%
  Eshed Gal \\
  The University of British Columbia\\
  Vancouver, Canada\\
  \texttt{eshedg@cs.ubc.ca} \\
  \And
  Md Shahriar Rahim Siddiqui \\
  The University of British Columbia\\
  Vancouver, Canada\\
  \texttt{shahriar.siddiqui@ubc.ca} \\
  \And
  Moshe Eliasof \\
  University of Cambridge\\
  Cambridge, United Kingdom\\
  \texttt{me532@cam.ac.uk} \\
  \And
  Eldad Haber \\
  The University of British Columbia\\
  Vancouver, Canada\\
  \texttt{ehaber@eoas.ubc.ca} \\
}

\begin{document}

\maketitle

\begin{abstract}
Modern generative modeling is dominated by transport from a noise prior
to data. We propose an alternative paradigm in which generation is performed
by a discrete stochastic dynamics that leaves the data distribution invariant,
initialized from data-supported states rather than from noise. The framework
can utilize any pretrained flow 
model. We develop two probability-preserving sampling mechanisms, a corrected
Langevin dynamics with a Metropolis adjustment and a predictor-corrector
flow, that operate directly on existing checkpoints. We validate the framework
on a synthetic Swiss-roll target, ImageNet-256 and Oxford Flowers-102,
where our samplers consistently improve over the original
generation procedures.

%On ImageNet-256, this is done directly with publicly
%released flow-matching checkpoints, without modifying any model weights.
\end{abstract}

\section{Introduction}
Generative modeling has been dominated by methods that construct a transport
from a simple reference law, typically, Gaussian noise, to a complex data
distribution \citep{song2021scorebased,ruthotto2021introduction}. In score and flow-based diffusion models
\citep{batzolis2021conditional,song2021maximum, lipman2022flow} this is realized by perturbing
data through a forward noising process and learning score fields for the
resulting family of intermediate distributions; sampling reverses the dynamics
from noise back to data \citep{lebowitz1963dynamical,haussmann1986time,song2019generative,anderson1982reverse, vincent2011connection,dhariwal2021diffusion,lipman2022flow}.
%The reverse-time diffusion viewpoint itself predates modern generative modeling
%\citep{lebowitz1963dynamical,haussmann1986time}; what made recent score-based
%and flow-matching models transformative was not the reverse-time stochastic process alone, but its combination with learning at
%scale \citep{vincent2011connection,dhariwal2021diffusion,lipman2022flow}.

In this paper, we propose a paradigm shift for data generation. Particularly, instead of
learning a global transport from noise to data, we seek to construct a
discrete stochastic dynamics
\begin{equation}
x_0,x_1,x_2,\ldots
\label{eq:intro_chain}
\end{equation}
such that
\begin{equation}
x_0 \sim p,
\qquad
x_j \sim p \quad \text{for all generation steps } j\geq 1,
\label{eq:intro_invariance_goal}
\end{equation}
where \(p\) denotes the data distribution. In other words, our approach states that rather than
learning how to move \emph{toward} the data distribution from an initial
prior, we learn how to move \emph{within} the data distribution while
preserving it at every step. As such, our approach shifts the goal of generative models from
\emph{transport} to \emph{invariance}: rather than transforming noise into
data, the sampler maintains the data distribution under a learned
probability-conserving dynamics. An illustration of our paradigm is illustrated in Figure~\ref{fig:teaser}.

% \begin{figure}[t]
%     \centering
%     \begin{tabular}{ccccc}
% \includegraphics[width=0.15\linewidth]{Figures/traj0.png} &
% \includegraphics[width=0.15\linewidth]{Figures/traj100.png} &
% \includegraphics[width=0.15\linewidth]{Figures/traj500.png} &
% \includegraphics[width=0.15\linewidth]{Figures/traj1500.png} &
% \includegraphics[width=0.15\linewidth]{Figures/traj2500.png} \\
% $t=0$ & $t=100$ & $t=500$ & $t=1500$ & $t=2500$
% \end{tabular}
%     \caption{Conservative sampling on a Swiss-roll-like target. Starting from
%     samples \(x_0\sim p\), the corrected dynamics moves within the target
%     distribution while preserving its law.}
%     \label{fig:swiss_role}
% \end{figure}

\begin{figure}[t]
    \centering
    \resizebox{\textwidth}{!}{%
\begin{tikzpicture}[x=1cm,y=1cm,>=Latex,
    panel/.style={rounded corners=10pt, line width=0.95pt, fill=white},
    titleblue/.style={font=\bfseries\fontsize{17}{19}\selectfont, text=tradblue},
    titlegreen/.style={font=\bfseries\fontsize{17}{19}\selectfont, text=ourgreen},
    subtitle/.style={font=\fontsize{11}{13}\selectfont, text=textgray, align=center},
    bodytext/.style={font=\fontsize{10.6}{12.3}\selectfont, text=textgray, align=center},
    labelblue/.style={font=\bfseries\fontsize{11.2}{12.5}\selectfont, text=tradblue, align=center},
    labelgreen/.style={font=\bfseries\fontsize{11.2}{12.5}\selectfont, text=ourgreen, align=center},
    auxblue/.style={font=\fontsize{9.4}{10.8}\selectfont, text=tradblue, align=center},
    auxgreen/.style={font=\fontsize{9.4}{10.8}\selectfont, text=ourgreen, align=center},
    ambientlabel/.style={font=\fontsize{11}{12.5}\selectfont, align=center},
    badge/.style={rounded corners=4pt, line width=0.65pt, fill=white}
]

% Increased \W slightly to give inner boxes more breathing room
% Increased \gap to slightly separate the left and right panels
\def\W{17.2}
\def\H{7.4}
\def\gap{0.6} 

\coordinate (L) at (0,0);
\coordinate (R) at (\W+\gap,0);

% Strict Bounding Box
\useasboundingbox (0,0) rectangle (\W+\W+\gap, \H);

% Panel outlines
\draw[panel, draw=tradblue] (L) rectangle ++(\W,\H);
\draw[panel, draw=ourgreen] (R) rectangle ++(\W,\H);

% Titles
\node[titleblue, anchor=north] at ($(L)+(\W/2,\H-0.28)$) {Existing Generative Approaches};
\node[subtitle, anchor=north] at ($(L)+(\W/2,\H-1.05)$) {Generate by transporting from a simple prior to the data distribution $p$.};

\node[titlegreen, anchor=north] at ($(R)+(\W/2,\H-0.28)$) {Our Approach: Probability-Preserving Generation};
\node[subtitle, anchor=north] at ($(R)+(\W/2,\H-1.05)$) {Start from data-supported states and evolve while preserving the probability law.};

% Ambient space boxes (Made wider: 16.6 instead of 15.8)
\fill[tradlight] ($(L)+(0.3,1.15)$) rectangle ++(16.6,4.5);
\draw[rounded corners=3pt, draw=tradblue!22, line width=0.6pt] ($(L)+(0.3,1.15)$) rectangle ++(16.6,4.5);

\fill[ourlight] ($(R)+(0.3,1.15)$) rectangle ++(16.6,4.5);
\draw[rounded corners=3pt, draw=ourgreen!22, line width=0.6pt] ($(R)+(0.3,1.15)$) rectangle ++(16.6,4.5);

% ---------------------------------------------------------
% LEFT PANEL
% ---------------------------------------------------------
% Left panel: Gaussian support shown as circle
\fill[tradblue!6] ($(L)+(2.75,3.55)$) circle (1.65);
\draw[draw=tradblue!30, line width=0.9pt] ($(L)+(2.75,3.55)$) circle (1.65);
\node[labelblue, anchor=north] at ($(L)+(2.75,1.15)$) {Gaussian support};
\node[bodytext, anchor=north] at ($(L)+(2.75,0.55)$) {$x_0 \sim \mathcal{N}(0,I)$};

\foreach \x/\y in {1.75/4.35,2.25/3.85,2.85/4.45,3.5/4.0,3.85/3.35,3.25/2.75,2.5/2.6,1.95/3.1,3.3/3.35,2.3/4.7}
    \fill[tradblue] ($(L)+(\x,\y)$) circle (0.08);

% Middle transport box and arrows
\draw[tradblue, line width=1.05pt, -{Latex[length=3mm]}] ($(L)+(4.75,3.55)$) -- ($(L)+(5.85,3.55)$);
\draw[rounded corners=5pt, line width=0.95pt, draw=tradblue, inner sep=2pt, fill=white] ($(L)+(5.9,2.65)$) rectangle ++(4.15,1.8);
\node[bodytext, text=tradblue, font=\bfseries\fontsize{11.2}{12.4}\selectfont] at ($(L)+(7.98,4.0)$) {Learned transport};
\node[bodytext] at ($(L)+(7.98,3.45)$) {reverse diffusion /};
\node[bodytext] at ($(L)+(7.98,2.95)$) {flow matching};
\draw[tradblue, line width=1.05pt, -{Latex[length=3mm]}] ($(L)+(10.1,3.55)$) -- ($(L)+(11.3,3.55)$);

% Left Panel Manifold
\draw[tradblue!32, line width=24pt, line cap=round, shift={(L)}]
    plot[domain=12.5:16.5, samples=40, smooth] (\x, {4.0 + 0.6*sin((\x-12.5)*360/4.0)});
\draw[manifoldblue!48, line width=22pt, line cap=round, shift={(L)}]
    plot[domain=12.5:16.5, samples=40, smooth] (\x, {4.0 + 0.6*sin((\x-12.5)*360/4.0)});

% Mathematically centered dots tracking the sine wave
\fill[tradblue] ($(L)+(12.90,4.35)$) circle (0.09);
\fill[tradblue] ($(L)+(13.50,4.60)$) circle (0.09);
\fill[tradblue] ($(L)+(14.10,4.35)$) circle (0.09);
\fill[tradblue] ($(L)+(14.80,3.73)$) circle (0.09);
\fill[tradblue] ($(L)+(15.50,3.40)$) circle (0.09);
\fill[tradblue] ($(L)+(16.10,3.65)$) circle (0.09);

\node[labelblue, anchor=north] at ($(L)+(14.65,1.15)$) {Data distribution};
\node[bodytext, anchor=north] at ($(L)+(14.65,0.55)$) {$x \sim p$};

% ---------------------------------------------------------
% RIGHT PANEL 
% ---------------------------------------------------------
% Right Panel Manifold (Centered properly in the new box width)
\draw[ourgreen!33, line width=24pt, line cap=round, shift={(R)}]
    plot[domain=1.1:16.1, samples=60, smooth] (\x, {4.0 + 0.6*sin((\x-1.1)*360/7.5)});
\draw[manifoldgreen!72, line width=22pt, line cap=round, shift={(R)}]
    plot[domain=1.1:16.1, samples=60, smooth] (\x, {4.0 + 0.6*sin((\x-1.1)*360/7.5)});

% Defined perfectly sized nodes to prevent dashed arrows overlapping the dots
\node[circle, fill=ourgreen, inner sep=0pt, minimum size=0.2cm] (r1) at ($(R)+(1.80,4.33)$) {};
\node[circle, fill=ourgreen, inner sep=0pt, minimum size=0.2cm] (r2) at ($(R)+(3.60,4.52)$) {};
\node[circle, fill=ourgreen, inner sep=0pt, minimum size=0.2cm] (r3) at ($(R)+(5.40,3.73)$) {};
\node[circle, fill=ourgreen, inner sep=0pt, minimum size=0.2cm] (r4) at ($(R)+(7.40,3.49)$) {};
\node[circle, fill=ourgreen, inner sep=0pt, minimum size=0.2cm] (r5) at ($(R)+(9.60,4.45)$) {};
\node[circle, fill=ourgreen, inner sep=0pt, minimum size=0.2cm] (r6) at ($(R)+(11.10,4.52)$) {};
\node[circle, fill=ourgreen, inner sep=0pt, minimum size=0.2cm] (r7) at ($(R)+(13.10,3.65)$) {};
\node[circle, fill=ourgreen, inner sep=0pt, minimum size=0.2cm] (r8) at ($(R)+(15.10,3.55)$) {};

% Automatically calculated dynamic hops
\begin{scope}[every path/.style={ourgreen!85!black, dashed, line width=0.95pt, -{Latex[length=2.6mm]}}]
    \draw (r1) to[bend left=25] (r2);
    \draw (r2) to[bend left=25] (r3);
    \draw (r3) to[bend left=25] (r4);
    \draw (r4) to[bend left=25] (r5);
    \draw (r5) to[bend left=25] (r6);
    \draw (r6) to[bend left=25] (r7);
    \draw (r7) to[bend left=25] (r8);
\end{scope}

% Text positioned completely underneath the wave for zero overlap
\node[labelgreen, anchor=south] at ($(R)+(8.6,2.6)$) {Data manifold $\mathcal{M}$};
\node[auxgreen, anchor=north] at ($(R)+(8.6,2.55)$) {learned distribution manifold};

% Aligned right panel text to match the Left Panel's anchor styling and coordinates
\node[labelgreen, anchor=north] at ($(R)+(2.75,1.15)$) {Sample from probability};
\node[bodytext, anchor=north] at ($(R)+(2.75,0.55)$) {$x_0 \sim p$};

\node[labelgreen, anchor=north] at ($(R)+(14.65,1.15)$) {Evolve };
\node[bodytext, anchor=north] at ($(R)+(14.65,0.55)$) {preserving $p$};

\end{tikzpicture}
    }
\caption{From transport to invariance. Existing generative models produce samples by transporting Gaussian noise to the data distribution. Our approach instead starts from data-supported states and evolves them through probability-preserving dynamics. By keeping the process on high-probability regions of the data manifold, this enables faithful sample variation, improved data fidelity, and efficient reuse of pretrained generative models.}   %\caption{A comparison of existing generative model in contrast to our approach. While existing models flow from a Gaussian to the distribution, our approach starts at the distribution and flows within the data manifold enabling better data fidelity.}
    \label{fig:teaser}
\end{figure}

This viewpoint is particularly natural in modern augmented retrieval systems
\citep{lewis2020retrieval} and latent generative pipelines
\citep{rombach2022high}. In such systems, generation does not begin
from an information-less state. Instead, one has access to existing valid examples,
nearest neighbors, memory entries, or latent codes that already lie near the
target distribution of interest \citep{blattmann2022retrieval}. In such
settings, starting from pure noise is not the only viable option, and may
not even be the most natural one. If one can start the generation process from data-supported
states, then a probability-preserving dynamics becomes a compelling
alternative to the common approach of noise-to-data generation.

At an infinitesimal level, Langevin dynamics, which is at the heart of modern generative models \cite{ho2020denoising,song2019generative}, provides precisely this
intuition \citep{bussi2007accurate}. That is, if \(s(x)=\nabla \log p(x)\) is the
score of the target density, then under standard regularity assumptions, the
continuous-time diffusion, that reads:
\begin{equation}
dX_t = s(X_t)\,dt + \sqrt{2}\,dW_t
\label{eq:intro_langevin_sde},
\end{equation}
preserves \(p\) as an invariant measure \citep{roberts1996exponential}.
Thus, in continuous time, score-based dynamics already define a
probability-preserving evolution. The difficulty arises \emph{after} discretization.
That is, the finite-step Langevin update defined by 
\begin{equation}
x_{j+1} = x_j + h\,s(x_j) + \sqrt{2h}\,\xi_j,
\qquad
\xi_j \sim \mathcal N(0,I),
\label{eq:intro_ula}
\end{equation}
has the correct infinitesimal behavior, but it does not preserve \(p\)
exactly for finite \(h\). This discrepancy is negligible for sufficiently small
steps, but it changes the distribution, and therefore breaks the central
requirement in Equation \eqref{eq:intro_invariance_goal} \citep{vempala2019rapid}. Our starting point is that
a generative model based on stationary dynamics, i.e., probability-preserving, as envisioned in our paradigm, should address this issue
directly.

We therefore develop two probability-preserving mechanisms that start from
samples already drawn from the density of interest. The \emph{first} is a corrected
Langevin dynamics based on a conservative score, or equivalently a denoiser.
In its score-based form, the correction restores invariance of \(p\); in its
denoiser-based form, it naturally acts on a Gaussian-smoothed law
\(p_\sigma\) and therefore preserves \(p_\sigma\). The \emph{second} is a
predictor--corrector flow construction: starting from \(x\sim p\), one first
adds a controlled amount of noise to move the sample to an intermediate
bridge distribution, and then uses a learned flow or direct solution map to
return it to \(p\). Thus we   present two distinct but related ways
to probe a probability density when initialization already lies on, or near,
that density. Notably, in both cases, the required score, denoiser, or flow, can be
obtained directly from a pretrained flow-matching model, so neither
mechanism requires training a new generative model from scratch.

The role of retrieval is then conceptually simple. Generation is performed
by initializing the sampler from retrieved data points or retrieved latent
codes, and then stepping forward with one of these probability-preserving
mechanisms. In the ideal case there is no burn-in, since the chain starts
from the target law itself; more generally, retrieved initializations place
the sampler in a high-probability region of the relevant data manifold.
This makes the framework especially attractive in latent spaces, where
retrieval is natural, geometry is smoother, and local stochastic exploration
can produce meaningful variation at lower computational cost
\citep{bengio2013representation}. We discuss related works in \Cref{app:related_work}.

The resulting approach differs from both standard diffusion models and
classical energy-based modeling. Diffusion models learn a reverse transport
from noise through a family of perturbed distributions
\citep{song2019generative,song2021scorebased}. This typically requires learning a
time-dependent score field \(s(x,t)\) across many noise levels. In contrast,
our first method does not require such a family of scores: because
generation is initialized from data-supported states and proceeds by local
probability-preserving moves, it is sufficient to learn a single conservative
score field, or one associated with a small fixed noise level. Energy-based
models, by contrast, typically learn a scalar energy and then sample by
approximate Langevin dynamics, often from noise or from a replay mechanism
\citep{du2019implicit}. Our approach instead begins from data-supported
states and then either uses a corrected score/denoiser-based move or a
predictor--corrector flow move. \textbf{A key practical advantage} of our approach is the ability to use pretrained flow matching models. Similar to \citep{keegan2026manifold},  our framework can be applied directly on top of such models,
inheriting their manifold-aware dynamics \emph{with no further training}. 
Recent work has also emphasized the importance
of conservation in score-based models and explored Metropolis-type
corrections based on line integration of the score
\citep{chao2023investigating,sjoberg2023mcmc}; our contribution is to combine
these ideas into a generative framework whose organizing principle is not
reverse noising, but stationary probing of the data distribution itself.

We view this as a new paradigm for generative modeling. The central object
is not a transport map from a simple prior, but a stochastic dynamics that
preserves the relevant law. The central initialization is not Gaussian
noise, but retrieved or observed data-supported states. And the central
learning problem is not to model a sequence of noisy marginals, but either
to learn a conservative score or denoiser together with a discrete
correction, or to learn a local flow that maps partially noised samples
back to the target law. In this sense, the proposed framework replaces
noise-to-data generation by retrieval-seeded stationary generation. In the
ideal setting of an exact conservative score and exact line integration,
the first mechanism preserves the appropriate target law exactly; in the
ideal setting of an exact bridge flow, the second mechanism preserves \(p\)
exactly. In practice, the quality of these approximations depends on the
learned score, denoiser, or flow model, and on the numerical approximation
used in the correction step.

\section{Method}
\label{sec:method}

We seek generative dynamics that leave a target law invariant. Throughout, \(p\) denotes the target data density on \(\mathbb R^d\), and we assume access to samples \(x\sim p\). We will use two related invariant laws. The score-based and denoiser-based Metropolis constructions operate on a Gaussian-smoothed distribution
\[
p_\sigma = p * \mathcal N(0,\sigma^2 I),
\]
and therefore leave \(p_\sigma\) invariant. In contrast, the predictor--corrector flow construction described later returns partially noised samples directly to the target law \(p\), and therefore leaves \(p\) invariant in the ideal case. Keeping this distinction explicit is important for both the theory and the experiments.

\subsection{On the necessity of Langevin Dynamics Correction}
\label{subsec:langevin}

A natural starting point is overdamped Langevin dynamics
\begin{equation}
dX_t = s(X_t)\,dt + \sqrt{2}\,dW_t,
\qquad
s(x)=\nabla \log p(x),
\label{eq:langevin_sde_method_short}
\end{equation}
which preserves \(p\) in continuous time under standard regularity assumptions \citep{roberts1996exponential}. Thus, at the infinitesimal level, Langevin dynamics is probability-conserving.

The difficulty appears after discretization. The Euler--Maruyama step is
\begin{equation}
y = x + h\,s(x) + \sqrt{2h}\,z,
\qquad
z\sim\mathcal N(0,I),
\label{eq:ula_step_method_short}
\end{equation}
which is the \textbf{U}nadjusted \textbf{L}angevin \textbf{A}lgorithm (ULA) \citep{durmus2017nonasymptotic}. Although Equation \eqref{eq:ula_step_method_short} has the correct local geometry, it does not preserve the target law exactly for finite step size.

A Gaussian example makes this explicit. If
\[
p(x)=\mathcal N(0,I),
\qquad
s(x)=-x,
\]
then Equation \eqref{eq:ula_step_method_short} becomes
\[
y=(1-h)x+\sqrt{2h}\,z.
\]
If \(x\sim\mathcal N(0,I)\), then
\[
y\sim \mathcal N(0,(1+h^2)I),
\]
which differs from \(p\) for every \(h>0\). Even in this simplest case, the discretized Langevin step is not probability-invariant.

This is the basic issue: Langevin dynamics is correct in continuous time, but its explicit discretization changes the law. The first part of our method corrects this finite-step error.

\subsection{Corrected Langevin Dynamics and a Denoiser formulation}
\label{subsec:correction}

Starting from the proposal, Equation \eqref{eq:ula_step_method_short}, define the deterministic part of the update by
\begin{equation}
D_h(x):=x+h\,s_\theta(x),
\label{eq:Dh_def}
\end{equation}
so that the proposal can be written as
\begin{equation}
y = D_h(x) + \sqrt{2h}\,z,
\qquad
z\sim\mathcal N(0,I),
\label{eq:Dh_proposal}
\end{equation}
with proposal density
\begin{equation}
q_h(y\mid x)=\mathcal N\!\left(y;\,D_h(x),\,2h\,I\right).
\label{eq:Dh_q}
\end{equation}

When \(p\) is known up to a normalizing constant, the standard correction is \textbf{M}etropolis-\textbf{A}djusted \textbf{L}angevin \textbf{A}lgorithm (MALA) \citep{roberts1996exponential,roberts1998optimal}. Its Metropolis--Hastings log-ratio is
\begin{equation}
\log r(x,y)
=
\log p(y)-\log p(x)
+\log q_h(x\mid y)-\log q_h(y\mid x),
\label{eq:log_r_rewrite}
\end{equation}
and the proposal is accepted with probability
\begin{equation}
\alpha(x,y)=\min\{1,\exp(\log r(x,y))\}.
\label{eq:alpha_rewrite}
\end{equation}
This correction restores invariance of \(p\).

In generative modeling, however, \(p\) is not available explicitly. To resolve this problem, we assume access to a learned score field \(s_\theta\). If this field is conservative, then the missing log-density difference can be recovered by line integration:
\begin{equation}
\Delta_{D_h}(x,y)
:=
\frac{1}{h}
\int_0^1
\Bigl\langle
D_h\bigl(x+t(y-x)\bigr)-\bigl(x+t(y-x)\bigr),\,y-x
\Bigr\rangle\,dt.
\label{eq:Dh_delta}
\end{equation}
The corrected acceptance ratio is therefore
\begin{equation}
\log r_{D_h}(x,y)
=
\Delta_{D_h}(x,y)
+\log q_h(x\mid y)-\log q_h(y\mid x),
\label{eq:Dh_acceptance}
\end{equation}
and the proposal is accepted with probability
\begin{equation}
\alpha_{D_h}(x,y)
=
\min\!\left\{1,\exp\bigl(\log r_{D_h}(x,y)\bigr)\right\}.
\label{eq:Dh_acceptance_probability}
\end{equation}
When \(D_h(x)=x+h\nabla\log p(x)\) and the integral Equation \eqref{eq:Dh_delta} is evaluated exactly, this reduces to dMALA and preserves \(p\).

This notation becomes especially useful when \(D_h\) is obtained from a denoiser. Let
$p_\sigma$ 
be the Gaussian-smoothed density, and let \(D_\sigma\) be a denoiser trained to predict the clean sample from a noisy observation. Tweedie's identity gives
\begin{equation}
D_\sigma(y)=y+\sigma^2 \nabla \log p_\sigma(y).
\label{eq:tweedie_denoiser_rewrite}
\end{equation}
Thus \(D_\sigma\) is exactly of the form Equation \eqref{eq:Dh_def} with
%\begin{equation}
$h=\sigma^2$ and $s_\sigma(y)=\nabla\log p_\sigma(y).$
%\label{eq:h_sigma_relation}
%\end{equation}
The proposal becomes
\begin{equation}
y = D_\sigma(x) + \sqrt{2}\,\sigma\,z,
\qquad
z\sim\mathcal N(0,I),
\label{eq:denoiser_proposal_rewrite}
\end{equation}
with proposal density
\begin{equation}
q_\sigma(y\mid x)=\mathcal N\!\left(y;\,D_\sigma(x),\,2\sigma^2 I\right).
\label{eq:denoiser_q_rewrite}
\end{equation}
The corresponding line integral is
\begin{equation}
\Delta_{D_\sigma}(x,y)
=
\frac{1}{\sigma^2}
\int_0^1
\Bigl\langle
D_\sigma\bigl(x+t(y-x)\bigr)-\bigl(x+t(y-x)\bigr),\,y-x
\Bigr\rangle\,dt,
\label{eq:denoiser_delta_rewrite}
\end{equation}
so the exact correction leaves the smoothed density \(p_\sigma\) invariant. This point is important: the denoiser-based Metropolis construction preserves \(p_\sigma\), not the original target law \(p\).

A further simplification is obtained by approximating Equation \eqref{eq:denoiser_delta_rewrite} with the trapezoidal rule. Define the residual
\begin{equation}
r_\sigma(x):=D_\sigma(x)-x.
\label{eq:denoiser_residual_rewrite}
\end{equation}
Then, 
\begin{equation}
\Delta_{D_\sigma}(x,y)
\approx
\frac{1}{2\sigma^2}\,\bigl\langle r_\sigma(x)+r_\sigma(y),\,y-x\bigr\rangle.
\label{eq:denoiser_trap_rewrite}
\end{equation}
Combining this with the Gaussian proposal ratio, the cross terms cancel and the approximate log-acceptance ratio becomes
\begin{equation}
\log r_{D_\sigma}(x,y)
\approx
\frac{1}{4\sigma^2}
\Bigl(
\norm{D_\sigma(x)-x}^2
-
\norm{D_\sigma(y)-y}^2
\Bigr).
\label{eq:trap_acceptance_simplified_rewrite}
\end{equation}

This yields a particularly simple denoiser-based sampler requiring only two evaluations of \(D_\sigma\)
per step. We refer to this sampler as \textbf{d}enoising MALA, or \textbf{dMALA}, in short, because it reduces to the standard Metropolis-adjusted Langevin algorithm in the conservative-score, exact-line-integration limit.

Training the denoiser is straightforward. For \(\sigma\) fixed or sampled from a small interval close to $0$, one minimizes
\begin{equation}
\mathcal L_{\mathrm{denoise}}
=
\E_{x,\eps,\sigma}
\bigl\|
D_\sigma(x+\sigma\eps)-x
\bigr\|^2,
\qquad
\eps\sim\mathcal N(0,I).
\label{eq:denoiser_training_objective_rewrite}
\end{equation}

In order to encourage a conservative score field, we use the classical fact from vector calculus \cite{marsden_tromba_vector_calculus} that if \(s:\Omega\subset\mathbb R^d\to\mathbb R^d\) is \(C^1\), \(\Omega\) is simply connected, and its Jacobian \(\nabla s(x)\) is symmetric for every \(x\in\Omega\), then \(s\) is a gradient field; that is, there exists a scalar potential \(\phi\) such that
\[
s(x)=\nabla \phi(x).
\]
Since
\begin{equation}
J_{D_\sigma}(x)=I+\sigma^2 J_{s_\sigma}(x),
\label{eq:denoiser_score_jacobian_relation_rewrite}
\end{equation}
the denoiser Jacobian \(J_{D_\sigma}(x)\) is symmetric if and only if the score Jacobian \(J_{s_\sigma}(x)\) is symmetric. Therefore, to encourage a conservative score field, it is sufficient to regularize the denoiser Jacobian. We do so through
\begin{equation}
\mathcal L_{\mathrm{sym}}^{D}(x)
=
\mathbb E_v
\bigl\|
J_{D_\sigma}(x)v - J_{D_\sigma}(x)^\top v
\bigr\|^2,
\label{eq:denoiser_symmetry_penalty_rewrite}
\end{equation}
which is numerically more stable than regularizing the score directly.

A practical reason to work with a denoiser is that strong pretrained flow-matching models are often already available. Rather than training a separate score network, one can extract a denoiser directly from the bridge on which the flow model was trained. This keeps the method internally consistent: the same interpolation used in flow training defines the denoiser used at sampling time. The corresponding construction is described in \Cref{app:flow_to_denoiser}.

\begin{example}[ULA versus corrected Langevin - MALA - on a Swiss-roll-like target]
To illustrate the need for correction, we consider a two-dimensional Swiss-roll-like target density and initialize particles from the target distribution. We then evolve them using either the unadjusted Langevin algorithm (ULA) or the corrected dynamics above. Since both methods start from \(x_0\sim p\), an ideal probability-preserving process should satisfy \(x_k\sim p\) for all \(k\). In practice, ULA drifts away from the target, while the corrected dynamics remains close to the original distribution.
\end{example}

% \begin{figure}[t]
%     \centering
%     \includegraphics[width=0.65\linewidth]{Figures/ULA_MALA_1.png}
%     \caption{Comparison between unadjusted Langevin dynamics (ULA) and the corrected dynamics (MALA) on a Swiss-roll-like distribution. ULA drifts away from the target, while the corrected dynamics preserves it.}
%     \label{fig:ULAMALA}
% \end{figure}

\begin{wrapfigure}{r}{0.5\textwidth}
    \centering
    \includegraphics[width=\linewidth]{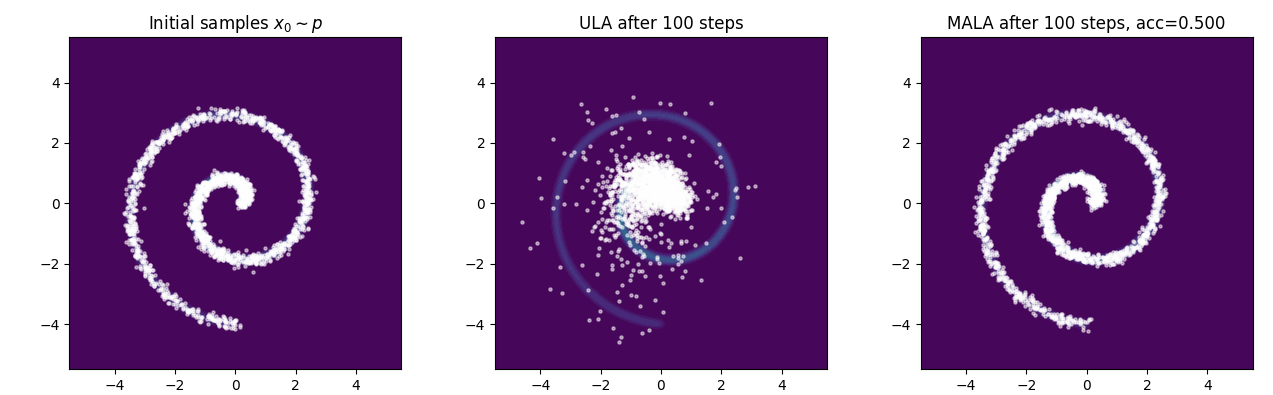}
    \caption{Comparison between unadjusted Langevin dynamics (ULA) and the corrected dynamics (dMALA) on a Swiss-roll-like distribution. ULA drifts away from the target, while the corrected dynamics preserves it.}
    \label{fig:ULAMALA}
\end{wrapfigure}

\subsection{A predictor--corrector approach for probability-invariant flow}
\label{subsec:predictor_corrector_flow}

The previous subsection used a local stochastic proposal together with a Metropolis correction. In the denoiser formulation, this yields a Markov chain \citep{chung1967markov} whose invariant law is the smoothed distribution \(p_\sigma\). We now describe a different construction that targets the original law \(p\) directly.

The corrected Langevin dynamics above uses Langevin as a local predictor and a Metropolis step as a correctness mechanism: if the proposal is inaccurate, it is simply rejected. This is effective when the local geometry is mild, but for highly curved or irregular manifolds the predictor can be poor, leading to low acceptance rates and consequently very slow exploration. In such settings, insisting that every proposal remain close to the high-probability region at all times may be unnecessarily restrictive. The flow-based approach considered here follows a different principle. Instead of proposing a local step and rejecting it when it leaves the manifold, we first allow the sample to move a controlled distance away from the target distribution by adding a small amount of noise, and then use a learned flow to transport it back to the data distribution. This predictor--corrector mechanism can make substantial moves even when the geometry of the manifold is highly nonlinear, while still preserving the target law in the ideal case.

Let \(p_1=p\) denote the target law, let \(p_0=\mathcal N(0,I)\), and consider the linear bridge
\begin{equation}
x_t=(1-t)z+t x,
\qquad
x\sim p_1,\quad z\sim p_0.
\label{eq:pc_linear_bridge_short}
\end{equation}
We denote by \(p_t\) the law of \(x_t\). Thus \(p_t\) interpolates between noise at \(t=0\) and data at \(t=1\).

Suppose now that we have trained a flow model associated with this bridge. This may be represented either by a velocity field
\begin{equation}
\dot x = v_\theta(x,t),
\qquad
x(\tau)=x_\tau,
\qquad
t\in[\tau,1],
\label{eq:pc_velocity_field_short}
\end{equation}
with flow map \(\Phi_{\theta,t\to s}\), or by a direct solution map
\begin{equation}
f_\theta(x_t,t,s)\approx \Phi_{t\to s}(x_t).
\label{eq:pc_solution_map_short}
\end{equation}
In our experiments, the latter is particularly convenient because it avoids numerical ODE integration at sampling time.

Starting from \(x\sim p_1\), choose \(\tau\in(0,1)\), typically close to \(1\), and draw
\begin{equation}
\hat x=(1-\tau)z+\tau x,
\qquad
z\sim\mathcal N(0,I).
\label{eq:pc_predictor_short}
\end{equation}
Since \(\hat x\sim p_\tau\), this predictor step moves the sample slightly away from the target distribution while remaining on the bridge. We then apply the learned flow from time \(\tau\) back to time \(1\):
\begin{equation}
x^+ = \Phi_{\theta,\tau\to 1}(\hat x), \approx
x^+ = f_\theta(\hat x,\tau,1).
\label{eq:pc_corrector_direct_short}
\end{equation}

Under an exact flow model, this move is probability-invariant.

\begin{proposition}[Probability invariance of the predictor--corrector kernel]

Assume that for every \(0\le t\le s\le 1\),
\begin{equation}
(\Phi_{t\to s})_\# p_t = p_s.
\label{eq:pc_pushforward_property_short}
\end{equation}
Let \(x\sim p_1\), draw \(z\sim p_0\) independently, form \(\hat x\) by Equation \eqref{eq:pc_predictor_short}, and set \(x^+=\Phi_{\tau\to 1}(\hat x)\). Then
$
x^+\sim p_1=p.$
\end{proposition}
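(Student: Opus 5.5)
The argument has two steps. First, the predictor step lands exactly in \(p_\tau\). Second, the pushforward hypothesis \eqref{eq:pc_pushforward_property_short} is applied with \(t=\tau\), \(s=1\).

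\textbf{Step 1.} The key point is that \(\hat x=(1-\tau)z+\tau x\), with \(x\sim p_1\) and \(z\sim p_0\) independent, is built exactly as the bridge variable \eqref{eq:pc_linear_bridge_short} at time \(t=\tau\). By definition \(p_\tau\) is the law of \((1-\tau)z+\tau x\) for independent \(x\sim p_1\), \(z\sim p_0\). The law of a measurable function of an independent pair depends only on the joint law, which is the product \(p_1\otimes p_0\) in both cases. Hence \(\hat x\sim p_\tau\). I would state this explicitly, noting that independence of \(z\) from \(x\) is exactly what the statement assumes.

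\textbf{Step 2.} Since \(x^+=\Phi_{\tau\to 1}(\hat x)\), for any Borel set \(A\) we have
\[
\Pr(x^+\in A)=\Pr\bigl(\hat x\in \Phi_{\tau\to1}^{-1}(A)\bigr)=p_\tau\bigl(\Phi_{\tau\to1}^{-1}(A)\bigr)=\bigl((\Phi_{\tau\to 1})_\# p_\tau\bigr)(A).
\]
By \eqref{eq:pc_pushforward_property_short} with \(t=\tau\le s=1\), this equals \(p_1(A)\). Therefore \(x^+\sim p_1=p\).

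\textbf{Remarks.} Neither step is a real obstacle. The only care needed is measurability: \(\Phi_{\tau\to1}\) must be Borel measurable for the pushforward to make sense, and this is implicit in the hypothesis. One might also add that iterating the kernel preserves \(p\) at every step by induction, since a fresh, independent \(z\) is drawn each time. This gives the chain property \eqref{eq:intro_invariance_goal}.
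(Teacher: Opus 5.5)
Your proof is correct and follows the paper's own argument: the predictor output reproduces the bridge variable at time $\tau$ from an independent pair, so it has law $p_\tau$, and the pushforward hypothesis with $t=\tau$, $s=1$ then gives $x^+\sim p_1$. Your additions (the measurability caveat, the explicit Borel-set computation, and the induction remark for the iterated chain) spell out what the paper's two-line proof leaves implicit.
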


% \begin{proof}
% By construction, \(\hat x\sim p_\tau\). Applying the exact flow map from \(\tau\) to \(1\) gives
% \[
% x^+=\Phi_{\tau\to 1}(\hat x)\sim (\Phi_{\tau\to 1})_\# p_\tau = p_1.
% \]
% \end{proof}
\begin{proof}
By construction, \(\hat x\sim p_\tau\). Applying the exact flow map from \(\tau\) to \(1\) gives
\[
x^+=\Phi_{\tau\to 1}(\hat x)\sim (\Phi_{\tau\to 1})_\# p_\tau = p_1. \qedhere
\]
\end{proof}
This yields a probability-preserving Markov kernel
\begin{equation}
K_\tau(x,\cdot):
\qquad
x \mapsto x^+ = \Phi_{\theta,\tau\to 1}\bigl((1-\tau)z+\tau x\bigr),
\qquad
z\sim\mathcal N(0,I).
\label{eq:pc_kernel_short}
\end{equation}
For \(\tau\) close to \(1\), this is a local move. Smaller values of \(\tau\) give stronger refreshment and potentially better mixing, but rely more heavily on the accuracy of the learned flow. A practical consequence is that the model only needs to be accurate near the data end of the bridge. Since the sampler uses \(\tau\) close to \(1\), it is sufficient to train on times
$
t\in[\tau_{\min},1]
\label{eq:pc_local_training_interval_short}
$
rather than on the full interval \([0,1]\).

Conceptually, this construction replaces Metropolis rejection by a learned return map to the target distribution. The predictor adds a controlled amount of noise, and the corrector follows the learned probability flow back to \(p\). This provides a second route to invariance, now based on flow matching rather than corrected Langevin dynamics.

%%%%%

\section{Experiments}
\label{sec:exp}

We design experiments to address the following research questions: \textbf{(i)} Does the corrected dynamics preserve a target law
        in practice, where unadjusted Langevin discretization drifts
        away from it?; \textbf{(ii)} Do the two probability-preserving samplers improve
        over the baseline samplers of their pretrained backbones on
        ImageNet-256?; 
\textbf{(iii)} How do the two samplers compare to each other,
        both in long-run exploration on a known density and across
        operating points on a high-dimensional benchmark?; and
\textbf{(iv)} How do the samplers behave under varying chain
        length and computational budget?

\textbf{Baselines.}
For ImageNet-256, we compare against the official 1-NFE solution-operator
sampler of SoFlow-XL/2-cond \citep{luo2025soflow} and no-CFG 400-step
Euler sampling using SiT-REPA-XL \citep{yu2024representation}, and we operate within the latent space of the Stable Diffusion VAE \citep{rombach2022high}, consistent with the pretrained baselines. For the
synthetic Swiss-roll experiment, the relevant baseline is the unadjusted
Langevin algorithm (ULA) \citep{durmus2017nonasymptotic}. All
ImageNet-256 metrics are computed on 50{,}000 generated samples against
the ADM ImageNet-256 reference batch. We present further experimental results on the Oxford-Flowers-102 dataset in Appendix~\ref{app:flowers_results}

\subsection{Probability preservation on a synthetic target}
\label{subsec:swissroll_experiments}

We compare the two probability-preserving mechanisms of \Cref{sec:method} on a two-dimensional Swiss-roll-like target density given by a Gaussian mixture supported along a spiral. This problem is useful since the target density is known exactly, so we can evaluate negative log-likelihood under the true law, while the geometry is sufficiently nonlinear to make long-term mixing visually and quantitatively informative. In all experiments below, both methods use the same pretrained SoFlow model \citep{luo2025soflow}. dMALA is implemented through the denoiser extracted from that flow model, while the predictor--corrector method uses the direct solution map itself.

\begin{wrapfigure}{r}{0.5\textwidth} % 0.3 makes it much narrower. You can also use {5cm}
    \vspace{-2em}
    \centering
    \includegraphics[width=\linewidth]{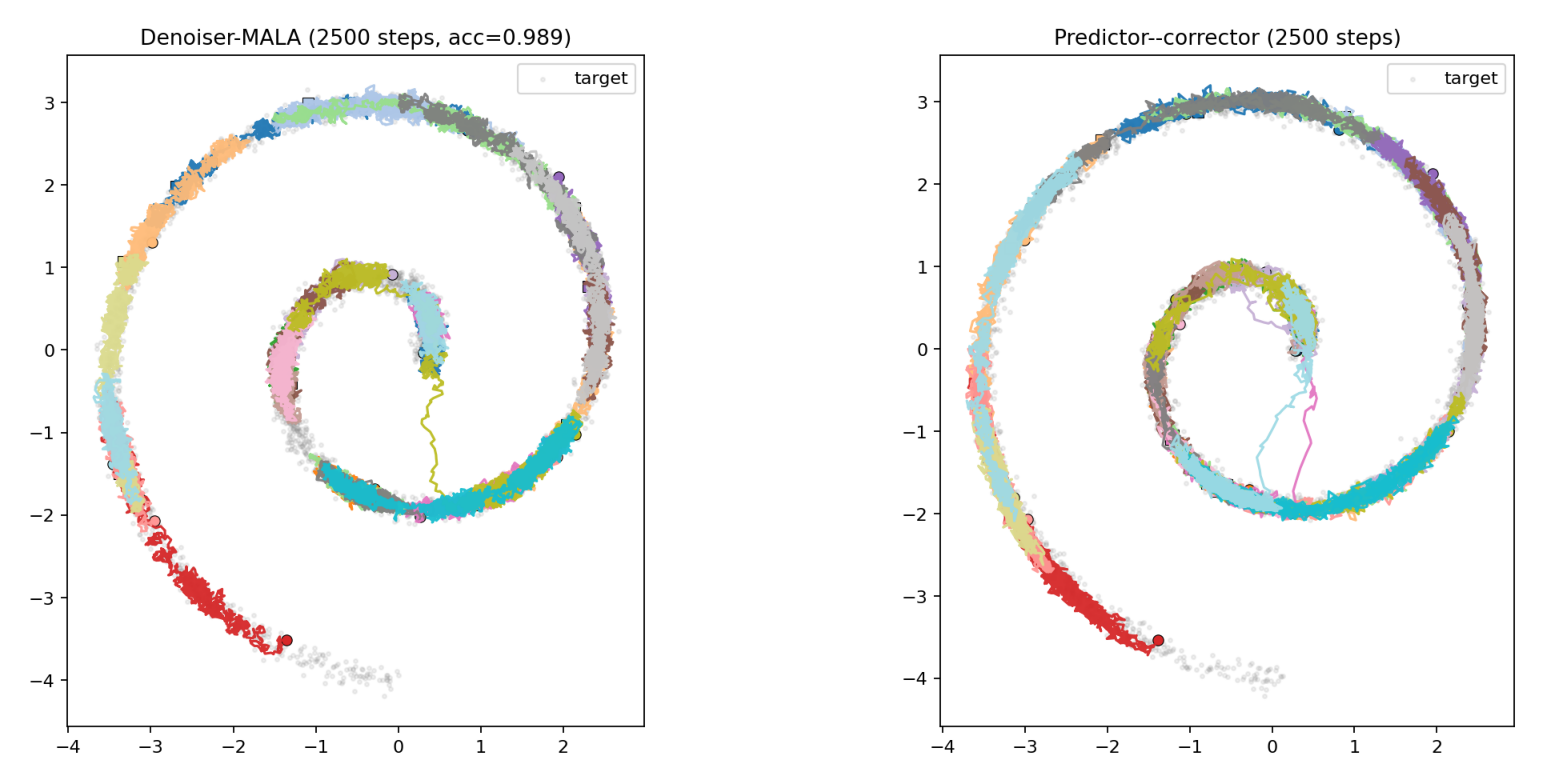}
    \caption{Side-by-side comparison. Left: dMALA trajectories on $p_\sigma$. Right: predictor--corrector flow on $p$. Both use 32 particles for 2500 steps. The figure demonstrates long chain behavior of both methods. Clearly, they continue to move broadly along the roll while remaining on the correct law.}
    \label{fig:swissroll_both_methods}
     \vspace{-1em}
\end{wrapfigure}

A key distinction between the two methods is the law they preserve. dMALA's dynamics is designed to preserve the smoothed distribution \(p_\sigma\), where the smoothing level is determined by the denoising time \(1-\tau = t_{\mathrm{noise}}\). In contrast, the predictor--corrector construction targets the original distribution \(p\) directly. For this reason, the quantitative evaluation is performed against \(p_\sigma\) for dMALA and against \(p\) for the predictor--corrector method.

%In all trajectory visualizations, we use \(32\) particles and run the chains for \(2500\) steps. We emphasize that these figures are intended to show long-run exploration of the manifold rather than short-term endpoint clouds. Since both samplers are initialized from samples already supported on the relevant distribution, the main qualitative question is whether they continue to move broadly along the roll while remaining on the correct law.

\textbf{Denoiser-Metropolis (dMALA).}
We first evaluate the corrected Langevin dynamics of \Cref{subsec:correction} using the denoiser extracted from the pretrained SoFlow model. We vary the denoising time \(t_{\mathrm{noise}}\), and therefore the implied smoothing level \(\sigma=(1-t_{\mathrm{noise}})/t_{\mathrm{noise}}\), while fixing the proposal step size to \(dt=\sigma^2\). Table~\ref{tab:swissroll_denoiser_mala} reports the resulting acceptance rate, the negative log-likelihood with respect to \(p_\sigma\), the MMD to samples drawn from \(p_\sigma\), and the mean displacement after \(200\) steps. 
The behavior of the dMALA step is observed in Table~\ref{tab:swissroll_denoiser_mala}.
% \begin{table}[t]
% \centering
% \caption{SoFlow-denoiser MALA ablation. This method preserves the smoothed law $p_\sigma$.}
% \label{tab:swissroll_denoiser_mala}
% \begin{tabular}{lcccccc}
% \toprule
% t_denoise & sigma & steps & Acc. rate & NLL to $p_\sigma$ & MMD to $p_\sigma$ & Mean move \\
% \midrule
% 0.90 & 0.111 & 200 & 0.896 & 2.912 & 0.016 & 1.508 \\
% 0.95 & 0.053 & 200 & 0.960 & 2.386 & 0.013 & 0.813 \\
% 0.98 & 0.020 & 200 & 0.989 & 1.733 & 0.017 & 0.370 \\
% \bottomrule
% \end{tabular}
% \end{table}

\begin{table}[t]
\small
\centering
\caption{SoFlow-denoiser MALA ablation. This method preserves the smoothed law $p_\sigma$.}
\label{tab:swissroll_denoiser_mala}
\begin{tabular}{lcccccc}
\toprule
$t_{\text{noise}}$ & $\sigma$ & steps & Acc. rate & NLL to $p_\sigma$ & MMD to $p_\sigma$ & Mean move \\
\midrule
0.90 & 0.111 & 200 & 0.896 & 2.912 & 0.016 & 1.508 \\
0.95 & 0.053 & 200 & 0.960 & 2.386 & 0.013 & 0.813 \\
0.98 & 0.020 & 200 & 0.989 & 1.733 & 0.017 & 0.370 \\
\bottomrule
\end{tabular}
\end{table}

\textbf{Predictor--corrector flow.}
We next evaluate the predictor--corrector mechanism of \Cref{subsec:predictor_corrector_flow}. Starting from \(x\sim p\), we partially noise the sample to the bridge distribution \(p_\tau\), and then use the learned SoFlow map to return it to \(p\). Table~\ref{tab:swissroll_predictor_corrector} reports results for several values of \(\tau\), including the negative log-likelihood with respect to the true target law \(p\), the MMD to samples drawn from \(p\), and the mean displacement after \(200\) steps. %Figure~\ref{fig:swissroll_predictor_corrector_traj} shows the corresponding long trajectory experiment for the best value of \(\tau\).

% \begin{table}[t]
% \small
% \centering
% \caption{Predictor--corrector flow ablation. This method targets the original law $p$.}
% \label{tab:swissroll_predictor_corrector}
% \begin{tabular}{lcccc}
% \toprule
% tau & steps & NLL to $p$ & MMD to $p$ & Mean move \\
% \midrule
% 0.70 & 200 & 8.994 & 0.057 & 2.811 \\
% 0.85 & 200 & 3.378 & 0.025 & 1.633 \\
% 0.95 & 200 & 1.760 & 0.020 & 0.568 \\
% \bottomrule
% \end{tabular}
% \end{table}
\begin{wraptable}{r}{0.6\textwidth} % 'r' for right alignment, 0.45\textwidth for width
\vspace{-1em}
\small
\centering
\caption{Predictor--corrector flow ablation. This method targets the original law $p$.}
\label{tab:swissroll_predictor_corrector}
\begin{tabular}{lcccc}
\toprule
tau & steps & NLL to $p$ & MMD to $p$ & Mean move \\
\midrule
0.70 & 200 & 8.994 & 0.057 & 2.811 \\
0.85 & 200 & 3.378 & 0.025 & 1.633 \\
0.95 & 200 & 1.760 & 0.020 & 0.568 \\
\bottomrule
\vspace{-2em}
\end{tabular}
\end{wraptable}

% \begin{figure}[t]
%     \centering
%     \includegraphics[width=0.8\linewidth]{swissroll_experiments/predictor_corrector/predictor_corrector_long_traj.png}
%     \caption{Swiss-roll trajectories for the predictor--corrector flow sampler. Starting from samples on the target distribution, the sampler first moves to the bridge distribution \(p_\tau\) and then returns to \(p\) through the learned SoFlow map. The plot shows \(32\) particles evolved for \(2500\) steps.}
%     \label{fig:swissroll_predictor_corrector_traj}
% \end{figure}

\textbf{Comparison of the two mechanisms.}
Finally, Figure~\ref{fig:swissroll_both_methods} compares the long-run trajectories of the two methods side by side using the same initial clean seeds. For the predictor--corrector method, these clean seeds are used directly. For dMALA, the same seeds are first perturbed by the corresponding Gaussian smoothing noise so that the initialization is consistent with the preserved law \(p_\sigma\). This comparison is intended to highlight the different exploration behavior of the two probability-preserving constructions. dMALA makes local accept/reject moves on the smoothed law, whereas the predictor--corrector method makes larger nonlocal moves by temporarily stepping away from the manifold and then returning to the target distribution through the learned flow.

% \begin{figure}[t]
%     \centering
%     \includegraphics[width=0.5\linewidth]{swissroll_experiments/both_methods/both_methods_long_traj.png}
%     \caption{Side-by-side comparison of the two probability-preserving mechanisms on the Swiss-roll target. Left: MALA trajectories on the smoothed law \(p_\sigma\). Right: predictor--corrector flow trajectories on the original target law \(p\). In both panels, \(32\) particles are evolved for \(2500\) steps.}
%     \label{fig:swissroll_both_methods}
% \end{figure}

Overall, the experiments are designed to illustrate the tradeoff between the two constructions. dMALA is a corrected local sampler with an explicit accept/reject mechanism and a well-defined invariant smoothed law \(p_\sigma\). The predictor--corrector method, by contrast, targets the original law \(p\) directly and can make larger moves by using the learned flow as a return map to the data distribution. The Swiss-roll experiments provide a controlled setting in which both behaviors can be observed clearly.

\begin{wrapfigure}{r}{0.45\textwidth}
    \centering
    \includegraphics[width=\linewidth]{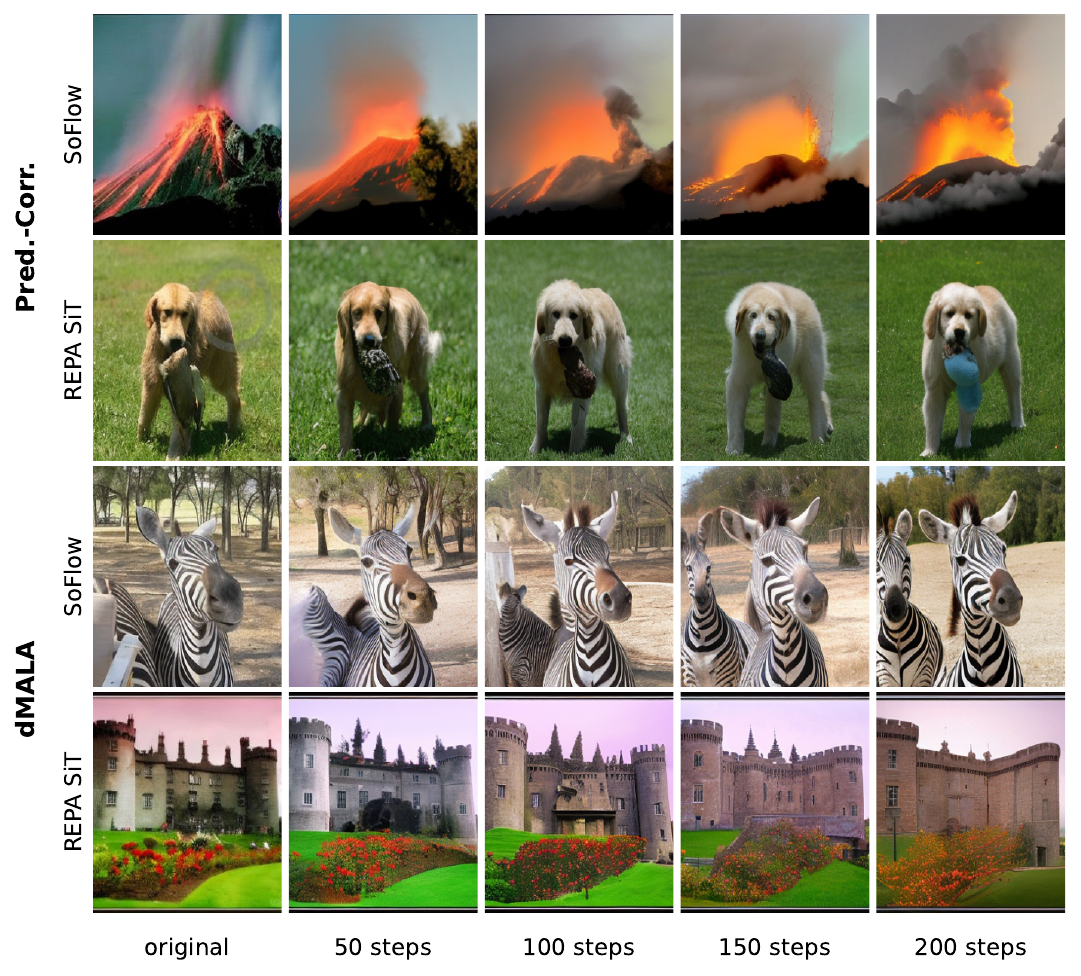}
    \caption{Visualization of ImageNet256 for two methods (Predictor--Corrector and dMALA) and two models (SoFlow-XL/2 and REPA SiT-XL/2). The first column shows the original image; each subsequent column shows the sample after an additional 50 iterations, up to 200 total. A high resolution version is provided in Appendix~\ref{app:high_resolution_imagenet}.}
    \label{fig:combined_imagenet}
\end{wrapfigure}

\subsection{ImageNet-256 generation}
\label{subsec:imnet_main}
We evaluate the two proposed stationary generation mechanisms on ImageNet-256 \citep{deng2009imagenet} using two pretrained latent flow backbones: SoFlow-XL/2-cond \citep{luo2025soflow} and SiT-REPA-XL \citep{yu2024representation}.
All metrics are computed on $50,000$ generated samples against the ADM
ImageNet-256 reference batch. For SoFlow, the baseline is the official
1-NFE solution-operator sampler. For SiT-REPA, the baseline is the no-CFG
Euler sampler used in our implementation. Our methods initialize from
data-supported VAE latents, apply a local noising step, and then return to the data distribution using either the predictor--corrector map or the
dMALA correction. The results of this experiment are shown in Table~\ref{tab:tab_imnet_combined}, which clearly shows that both methods give diverse enough images (Recall and IS scores) while maintaining very low FID.

% In our first experiment we test two flow models (SiTRepa and SoFlow) as engines for the denoiser and the corrector steps. 

\begin{table}[t]
\centering
\small
\setlength{\tabcolsep}{5pt}
\caption{Predictor--corrector and dMALA flows on ImageNet-256.}
\label{tab:tab_imnet_combined}
\begin{tabular}{@{}llccccc@{}}
\toprule
Backbone & Method / setting
& FID $\downarrow$ & sFID $\downarrow$ & IS $\uparrow$
& Precision $\uparrow$ & Recall $\uparrow$ \\
\midrule

\multicolumn{7}{l}{\textit{SoFlow-XL/2-cond backbone}} \\
& \textcolor{gray}{Baseline (1-NFE)}
& \textcolor{gray}{2.96} & \textcolor{gray}{4.60} & \textcolor{gray}{238.53}
& \textcolor{gray}{0.795} & \textcolor{gray}{0.564} \\
\cmidrule(lr){2-7}

& Predictor--Corrector, $K=10$, $t_{\rm noise}=0.3$
& 2.28 & 5.47 & 297.37 & 0.79 & 0.62 \\

& dMALA, $K=10$, $\sigma=0.3$
& \textbf{1.70} & \textbf{5.05} & 300.04 & 0.79 & \textbf{0.65} \\

& Predictor--Corrector, $K=20$, $t_{\rm noise}=0.5$
& 4.67 & 5.33 & \textbf{349.84} & \textbf{0.87} & 0.50 \\

& dMALA, $K=20$, $\sigma=0.3$
& 2.17 & 5.81 & 306.25 & 0.80 & 0.62 \\

\addlinespace[5pt]
\multicolumn{7}{l}{\textit{SiT-REPA-XL backbone}} \\
% & \textcolor{gray}{Baseline (no CFG, 25 Euler steps)}
% & \textcolor{gray}{11.38} & \textcolor{gray}{5.91} & \textcolor{gray}{118.30}
% & \textcolor{gray}{0.61} & \textcolor{gray}{0.70} \\
 & \textcolor{gray}{Baseline (no CFG, 400 Euler steps)}  & \textcolor{gray}{6.61} & \textcolor{gray}{5.62}  & \textcolor{gray}{152.52} & \textcolor{gray}{0.68} & \textcolor{gray}{0.69} \\
\cmidrule(lr){2-7}

& Predictor--Corrector, $K=10$, $t_{\rm noise}=0.3$
& 3.11 & 6.06 & 220.35 & 0.73 & 0.68 \\

& dMALA, $K=10$, $\sigma=0.3$
& 1.93 & 5.62 & \textbf{241.20} & \textbf{0.74} & 0.70 \\

& Predictor--Corrector, $K=20$, $t_{\rm noise}=0.5$
& 3.06 & 6.24 & 199.50 & 0.72 & 0.69 \\

& dMALA, $K=20$, $\sigma=0.3$
& \textbf{1.79} & \textbf{5.14} & 236.03 & \textbf{0.74} & \textbf{0.71} \\

\bottomrule
\end{tabular}
\end{table}

\textbf{Long Chain Generation:}
\label{subsec:long_chain}
In the next experiment we test how the methods behave for longer chains. To this end we increase $K$ (the number of steps in the chain and compute the same metrics. The results for dMALA and Predictor-corrector are presented in Table~\ref{tab:long_chain_soflow_combined}. The table shows only mild deterioration with the length of the chain. This is expected for approximate pretrained flows: although the ideal kernels are
probability-preserving, score approximation error by the pretrained model can accumulate
over repeated applications. Samples from this run can be observed in Figure~\ref{fig:combined_imagenet} and in high resolution in Figure~\ref{fig:combined_imagenet_app}.
%We therefore treat the number of probe steps as a
%controlled hyperparameter rather than claiming unconditional improvement for
%arbitrarily long chains. 

\textbf{Computational efficiency.}
Beyond chain length, the predictor--corrector mechanism is also
competitive in compute. With the SiT-REPA-XL backbone, $200$ model calls (number of function evaluations - NFEs)
attain FID $4.91$, lower than the $6.51$ FID of $800$-step
forward-Euler integration of the same backbone (\Cref{app:nfe_efficiency},
\Cref{tab:nfe_budget_sitrepa}).

We additionally evaluate our methods on the Oxford Flowers-102 dataset \citep{nilsback2008automated}. The quantitative results are detailed in \Cref{app:flowers_results}, and the corresponding visualizations are provided in \Cref{app:flowers}.

\begin{table}[t]
\centering
\small
\setlength{\tabcolsep}{4pt}
\caption{Long-chain behavior of dMALA and Predictor-Corrector on ImageNet-256
with SoFlow-XL/2-cond. dMALA uses
denoiser noise level $\sigma=0.3$; Predictor-Corrector uses bridge noise
$t_{\rm noise}=0.3$; $K$ denotes the number of iterations. Acc.\ rate is the
running mean Metropolis acceptance probability.}
\label{tab:long_chain_soflow_combined}
\begin{tabular}{@{}llccccccc@{}}
\toprule
Method & $K$ & Acc.\ rate & FID $\downarrow$ & sFID $\downarrow$
 & IS $\uparrow$ & Precision $\uparrow$ & Recall $\uparrow$ \\
\midrule
\textcolor{gray}{Baseline (1-NFE)} & \textcolor{gray}{--} & \textcolor{gray}{--}
 & \textcolor{gray}{2.96} & \textcolor{gray}{4.60} & \textcolor{gray}{238.53}
 & \textcolor{gray}{0.80} & \textcolor{gray}{0.56} \\
\cmidrule(lr){1-8}

dMALA & 50  & 0.76 & 3.23 & 7.72 & 313.29 & 0.81 & 0.57 \\
     & 100 & 0.54 & 4.34 & 10.40 & 308.11 & 0.82 & 0.52 \\
     & 150 & 0.53 & 5.07 & 12.45 & 301.23 & 0.81 & 0.49 \\
     & 200 & 0.53 & 5.61 & 14.26 & 294.47 & 0.81 & 0.47 \\
\midrule

Pred.--Corr. & 50  & --   & 3.96 & 8.90  & 313.48 & 0.82 & 0.53 \\
             & 100 & --   & 5.18 & 12.14 & 304.34 & 0.81 & 0.48 \\
             & 150 & --   & 5.93 & 14.64 & 290.25 & 0.80 & 0.46 \\
             & 200 & --   & 6.46 & 16.48 & 278.84 & 0.79 & 0.45 \\
\bottomrule
\end{tabular}
\end{table}
%%%%%%%%%%%%%%%%%%%%%%%%%%

% A similar result for predictor corrector are shown in Table~\ref{tab:long_chain_pred_cor} for the predictor corrector.

% \begin{figure}[t]
%     \centering
%     \includegraphics[width=0.65\linewidth]{Figures/imagenet_images_for_paper/combined_imagenet_all_methods_new.pdf}
%     \caption{Visualization of ImageNet256 for two methods
%              (Predictor--Corrector and dMALA) and two models (SoFlow-XL/2 and
%              REPA SiT-XL/2). The first column shows the original image;
%              each subsequent column shows the sample after an additional
%              50 iterations, up to 200 total. The high resolution image is provided in \Cref{app:high_resolution_imagenet}}
%     \label{fig:combined_imagenet}
% \end{figure}

\section{Discussion and Conclusion}
\label{sec:conclusion}

We have proposed a generative modeling framework in which the central
operation is not transport from noise to data, but approximate invariance of
the data law under a probability-preserving dynamics. Two mechanisms
instantiate this principle. The first is a corrected Langevin dynamics whose
Metropolis adjustment preserves a Gaussian-smoothed law \(p_\sigma\). The
second is a predictor--corrector flow that maps partially noised samples back
toward the original law \(p\). In the ideal-model setting, both define
invariant kernels; in practice, they can be implemented using quantities
available from pretrained flow-matching models, and therefore can be applied
on top of public checkpoints without retraining.
The broader impact of our method is
discussed in  \Cref{app:broader_impact}.

{\textbf{Limitations:}}
The effectiveness of both the Denoiser-Metropolis (dMALA) and Predictor--Corrector mechanisms is bound by the quality of the underlying pretrained flow or score-based model. Although the ideal kernels are probability-preserving in their respective ideal-model settings, approximation to the score  model can accumulate errors over repeated applications of the kernel. We explicitly characterize this behavior in \Cref{tab:long_chain_soflow_combined}. Our empirical validation is also restricted to continuous benchmarks (Swiss-roll, Oxford Flowers-102, ImageNet-256). Extending this paradigm to discrete domains such as text generation, where bridge models have only recently emerged, is left to future work.

The experiments support four main conclusions.
%First, on the synthetic
%Swiss-roll target, unadjusted Langevin dynamics visibly drifts away from the
%target distribution, while the corrected dynamics remains close to its
%preserved law (\Cref{fig:ULAMALA}); 
First, the quantitative results in
Tables~\ref{tab:swissroll_denoiser_mala}
and~\ref{tab:swissroll_predictor_corrector} show the same behavior for the
smoothed law \(p_\sigma\) and the original law \(p\), respectively. Second,
both samplers improve FID over the baseline sampler of every backbone tested.
%For example, with SoFlow-XL/2-cond, MALA at \(K=10\), \(\sigma=0.3\) attains
%FID \(1.70\), compared with \(2.96\) for the \(1\)-NFE baseline; with
%SiT-REPA-XL, MALA at \(K=20\), \(\sigma=0.3\) attains FID \(1.79\), compared
%with \(11.38\) for the \(25\)-step no-CFG Euler baseline
(\Cref{tab:tab_imnet_combined}). 
Third, the two samplers occupy different
operating regimes: dMALA with small smoothing gives the lowest FID, whereas
the predictor-corrector mechanism
 trades some FID for higher Inception Score and
Precision. This is consistent with the qualitative behavior in
\Cref{fig:swissroll_both_methods}: dMALA performs local accept/reject moves,
while predictor-corrector sampling makes larger non-local moves through the
learned flow. Finally, chain length and NFE budget provide graceful
trade-offs. Performance degrades smoothly on ImageNet-256,
consistent with accumulated approximation  errors in the
pretrained backbone (\Cref{tab:long_chain_soflow_combined}). For
velocity-field backbones, the predictor-corrector mechanism also compares
favorably with direct ODE integration: \(200\) NFEs of steps attain FID
\(4.91\), compared with \(6.51\) for \(800\)-step forward-Euler integration
(\Cref{tab:nfe_budget_sitrepa}).

% The limitations and broader impact of our method are
% discussed in Appendices~\ref{app:limitations}
% and~\ref{app:broader_impact}. 
To summarize, stationary probing offers an alternative to noise-to-data
transport: any sample from the
model target distribution already provides a valid initial state for the
probability-preserving dynamics.

\clearpage
\newpage

\bibliographystyle{unsrtnat}
\bibliography{references}

% \section*{References}

% References follow the acknowledgments in the camera-ready paper. Use unnumbered first-level heading for
% the references. Any choice of citation style is acceptable as long as you are
% consistent. It is permissible to reduce the font size to \verb+small+ (9 point)
% when listing the references.
% Note that the Reference section does not count towards the page limit.
% \medskip

% {
% \small

% [1] Alexander, J.A.\ \& Mozer, M.C.\ (1995) Template-based algorithms for
% connectionist rule extraction. In G.\ Tesauro, D.S.\ Touretzky and T.K.\ Leen
% (eds.), {\it Advances in Neural Information Processing Systems 7},
% pp.\ 609--616. Cambridge, MA: MIT Press.

% [2] Bower, J.M.\ \& Beeman, D.\ (1995) {\it The Book of GENESIS: Exploring
%   Realistic Neural Models with the GEneral NEural SImulation System.}  New York:
% TELOS/Springer--Verlag.

% [3] Hasselmo, M.E., Schnell, E.\ \& Barkai, E.\ (1995) Dynamics of learning and
% recall at excitatory recurrent synapses and cholinergic modulation in rat
% hippocampal region CA3. {\it Journal of Neuroscience} {\bf 15}(7):5249-5262.
% }

%%%%%%%%%%%%%%%%%%%%%%%%%%%%%%%%%%%%%%%%%%%%%%%%%%%%%%%%%%%%

\clearpage
\newpage

\appendix
\crefalias{section}{appendix}

%%%%%%%%%%%%%%%%%%%%%%%%%%%%%%%%%%%%%%%%%
\section*{Appendix Outline}

The appendices are organized as follows:

\begin{itemize}\setlength{\itemsep}{2pt}
    \item \Cref{app:related_work} discusses  related work relevant to our methods.
  \item \Cref{app:flow_to_denoiser} derives the denoiser used at sampling
        time from a pretrained flow-matching model, with a general formula
        for any linear bridge $x_t = \kappa(t)x + \sigma(t)z$ and an explicit
        specialization to the linear interpolant $x_t = tx + (1-t)z$.
  
  \item \Cref{app:algorithms} provides explicit pseudocode for one step of
        the Denoiser-Metropolis (dMALA) sampler (\Cref{alg:denoiser_metropolis_step})
        and the Predictor--Corrector sampler
        (\Cref{alg:predictor_corrector_step}), together with the cancellation
        argument that takes the trapezoidal line-integral acceptance ratio of
        \Cref{subsec:correction} into the simple two-evaluation form used in
        our experiments.
  \item \Cref{app:experimental_details} describes the datasets, pretrained
        backbones (SoFlow-XL/2-cond and SiT-REPA-XL), the Flowers-102 U-Net
        architecture, and the compute used.
  \item \Cref{app:additional_results} reports additional quantitative
        results: an evaluation on Oxford Flowers-102 and a dMALA
        ablation over smoothing levels $\sigma$ and chain lengths $K$ on
        ImageNet-256 with the SoFlow backbone, in addition to a study on the computational efficiency of our method.
  \item 
        \Cref{app:broader_impact} discusses broader societal impact.   
  \item \Cref{app:visualizations} shows generated-sample visualizations across
         step counts on Flowers-102 and ImageNet-256.

\end{itemize}

%==============================================================================

\section{Related Work}
\label{app:related_work}

\textbf{Generative Modeling.}
Deep generative models have progressed from autoregressive networks \citep{van2016conditional}, exact-likelihood normalizing flows \citep{dinh2016density, papamakarios2021normalizing}, and variational autoencoders \citep{kingma2013auto, rezende2014stochastic} to generative adversarial networks capable of high-resolution synthesis \citep{goodfellow2014generative, karras2019style}. More recently, denoising diffusion probabilistic models \citep{sohl2015deep, ho2020denoising, nichol2021improved} have achieved state-of-the-art sample quality by learning to reverse a gradual noising process. While these foundational models predominantly focus on mapping a simple noise prior to the target data distribution, our work provides a complementary framework that operates directly within the data distribution itself.

\textbf{Continuous-Time Transport and Flows.}
The generative processes of diffusion can be elegantly unified through the lens of continuous-time differential equations \citep{chen2018neural, grathwohl2018ffjord}. Recent formulations such as stochastic interpolants \citep{albergo2022building, albergo2023stochastic}, rectified flows \citep{liu2022flow}, and action matching \citep{neklyudov2023action} learn ordinary differential equations (ODEs) to smoothly transport probability mass from a base distribution to the target data. While traditionally optimized for efficient, straight-path generation from pure noise, the vector fields learned by these models provide the fundamental continuous-time dynamics that our predictor-corrector mechanism repurposes for local probability-preserving exploration.

\textbf{Data-Supported Initialization.}
Beyond unconditional generation from noise, initializing stochastic processes from data-supported states is widely used in inverse problems \citep{kawar2022denoising, wang2022zero} and guided image editing. Methods like SDEdit \citep{meng2021sdedit} inject controlled noise into an existing sample and reverse the process to produce realistic modifications, while deterministic inversion techniques map real data into latent spaces for localized semantic editing \citep{song2020denoising, hertz2022prompt, mokady2023null}. Although these methods primarily aim to balance user-guided constraints with sample realism, they share the underlying principle of exploring the local geometry around a given data manifold---a property we explicitly formalize as a stationary generative kernel.

\section{Estimating a Denoiser from a Flow Model}
\label{app:flow_to_denoiser}

A practical advantage of our framework is that it can reuse pretrained flow-matching models. In many settings, strong flow models are already available, and it is desirable to use them directly rather than train a separate denoiser. The key point is that the bridge used in flow matching already defines a consistent denoising problem near the data manifold. Thus a pretrained flow model can be converted into a denoiser in a way that is fully consistent with the interpolation on which it was trained.

Consider a general linear bridge
\begin{equation}
x_t = \kappa(t)\,x + \sigma(t)\,z,
\qquad
x\sim p,
\quad
z\sim \mathcal N(0,I),
\label{eq:generic_bridge_denoiser}
\end{equation}
where \(t\in[0,1]\), \(\kappa(1)=1\), and \(\sigma(1)=0\). To connect this bridge to denoising, define a noisy variable
\begin{equation}
x_\eta = x + \eta z.
\label{eq:noisy_variable_eta}
\end{equation}
If we choose
\begin{equation}
\eta = \frac{\sigma(t)}{\kappa(t)},
\label{eq:eta_from_bridge}
\end{equation}
then
\begin{equation}
\kappa(t)\,x_\eta
=
\kappa(t)\,x + \kappa(t)\eta z
=
\kappa(t)\,x + \sigma(t)\,z
=
x_t.
\label{eq:xt_equals_kappa_xeta}
\end{equation}
Thus the bridge variable \(x_t\) is simply a scaled version of the noisy sample \(x_\eta\).

Suppose first that the pretrained flow model provides a direct solution map
\begin{equation}
f_\theta(x_t,t,1) \approx x,
\label{eq:direct_solution_map_denoiser}
\end{equation}
that maps a point at time \(t\) to its prediction at time \(1\). Then, using Equation \eqref{eq:xt_equals_kappa_xeta}, the denoiser associated with noise level \(\eta\) is
\begin{equation}
D_\eta(y)
:=
\mathbb E[x \mid x_\eta = y]
\approx
f_\theta\!\bigl(\kappa(t)\,y,\; t,\; 1\bigr),
\qquad
\eta=\frac{\sigma(t)}{\kappa(t)}.
\label{eq:denoiser_from_solution_map_correct}
\end{equation}
This is the simplest and most useful form in practice: one evaluates the flow model at the scaled point \(\kappa(t)y\), not at \(y\) itself.

If instead the pretrained model provides only a velocity field
\begin{equation}
v_\theta(x_t,t) \approx \E[\dot x_t \mid x_t],
\label{eq:velocity_field_denoiser}
\end{equation}
then the denoiser can still be recovered. Differentiating Equation \eqref{eq:generic_bridge_denoiser} gives
\begin{equation}
\dot x_t = \kappa'(t)\,x + \sigma'(t)\,z.
\label{eq:generic_bridge_velocity_denoiser}
\end{equation}
Together, Equation \eqref{eq:generic_bridge_denoiser} and Equation \eqref{eq:generic_bridge_velocity_denoiser} form a \(2\times 2\) linear system in \(x\) and \(z\). Solving for \(x\) gives
\begin{equation}
x
=
\frac{\sigma'(t)\,x_t - \sigma(t)\,\dot x_t}
{\kappa(t)\sigma'(t)-\kappa'(t)\sigma(t)}.
\label{eq:x_from_xt_xtdot_correct}
\end{equation}
Substituting \(x_t=\kappa(t)y\) with \(y=x_\eta\) and taking conditional expectations yields
\begin{equation}
D_\eta(y)
\approx
\frac{\sigma'(t)\,\kappa(t)\,y - \sigma(t)\,v_\theta(\kappa(t)y,t)}
{\kappa(t)\sigma'(t)-\kappa'(t)\sigma(t)},
\qquad
\eta=\frac{\sigma(t)}{\kappa(t)}.
\label{eq:denoiser_from_velocity_correct}
\end{equation}
Again, the input to the flow model is the scaled point \(\kappa(t)y\).

For the linear bridge used in our experiments,
\begin{equation}
x_t = t\,x + (1-t)\,z,
\label{eq:linear_bridge_special_denoiser}
\end{equation}
we have \(\kappa(t)=t\), \(\sigma(t)=1-t\), and therefore
\begin{equation}
\eta = \frac{1-t}{t}.
\label{eq:eta_linear_bridge}
\end{equation}
In this case the denoiser induced by a direct solution map is simply
\begin{equation}
D_\eta(y)
\approx
f_\theta(t\,y,\; t,\; 1),
\qquad
t=\frac{1}{1+\eta}.
\label{eq:denoiser_from_solution_map_linear}
\end{equation}
If one uses the velocity field instead, then Equation \eqref{eq:denoiser_from_velocity_correct} reduces to
\begin{equation}
D_\eta(y)
\approx
t\,y + (1-t)\,v_\theta(t y,t),
\qquad
t=\frac{1}{1+\eta}.
\label{eq:denoiser_from_velocity_linear_correct}
\end{equation}

This construction is important for two reasons. First, it lets us use strong pretrained flow-matching codes directly as denoisers near the data manifold. Second, it keeps the method internally consistent: the same bridge used in training defines the denoiser used at sampling time. In particular, if \(t\) is chosen close to \(1\), then \(\eta\) is small, the denoiser acts only in a narrow neighborhood of the data manifold, and the resulting corrected sampler operates in exactly the regime on which the flow model was trained.

\FloatBarrier

%==============================================================================
\section{Algorithmic Details}
\label{app:algorithms}

This appendix gives explicit pseudocode for one step of each of the two probability-preserving samplers introduced in \Cref{sec:method}, and the short cancellation argument that turns the line-integral acceptance ratio of \Cref{subsec:correction} into the simple two-evaluation form actually used in our experiments.

\subsection{Denoiser-Metropolis (dMALA) Sampler}
\label{app:alg_denoiser_mala}

dMALA combines the proposal Equation \eqref{eq:denoiser_proposal_rewrite} with the simplified acceptance ratio Equation \eqref{eq:trap_acceptance_simplified_rewrite}. One step requires exactly two evaluations of the denoiser: one at the current state and one at the proposed state (Algorithm~\ref{alg:denoiser_metropolis_step}).

\begin{algorithm}[h]
\caption{One step of the Denoiser-Metropolis (dMALA) sampler.}
\label{alg:denoiser_metropolis_step}
\begin{algorithmic}[1]
\REQUIRE current state $x_j$, denoiser $D_\sigma$, smoothing level $\sigma > 0$
\STATE Draw $z \sim \mathcal{N}(0, I)$.
\STATE Form the proposal $y \;=\; D_\sigma(x_j) + \sqrt{2}\,\sigma\, z$. \hfill (cf.~Equation \eqref{eq:denoiser_proposal_rewrite})
\STATE Compute the approximate log-acceptance ratio
\[
\log r_{D_\sigma}(x_j, y)
\;\approx\;
\frac{1}{4\sigma^2}\Bigl(\bigl\lVert D_\sigma(x_j) - x_j\bigr\rVert^2 - \bigl\lVert D_\sigma(y) - y\bigr\rVert^2\Bigr).
\]
\hfill (cf.~Equation \eqref{eq:trap_acceptance_simplified_rewrite})
\STATE Set $\alpha \;=\; \min\bigl\{1,\; \exp(\log r_{D_\sigma}(x_j, y))\bigr\}$.
\STATE Draw $u \sim \mathrm{Unif}(0, 1)$.
\IF{$u \le \alpha$}
    \STATE Set $x_{j+1} = y$. \COMMENT{accept}
\ELSE
    \STATE Set $x_{j+1} = x_j$. \COMMENT{reject}
\ENDIF
\ENSURE $x_{j+1}$
\end{algorithmic}
\end{algorithm}

\subsection{Predictor--Corrector Sampler}
\label{app:alg_predictor_corrector}

The predictor--corrector sampler combines a single Gaussian noising step at
the bridge time $\tau$ with the application of the learned flow map
$\Phi_{\theta,\tau\to 1}$ back to the data end of the bridge
(Algorithm~\ref{alg:predictor_corrector_step}). Any procedure that
realizes this flow map suffices, and our framework is agnostic to the
choice. When the pretrained model exposes a direct solution map
$f_\theta(\cdot,\tau,1)$ as for SoFlow~\citep{luo2025soflow}, the corrector
is a single network function evaluation (NFE). When the model is given as
a velocity field $v_\theta$ as for
SiT-REPA-XL~\citep{yu2024representation}, the flow map is realized by
numerically integrating $\dot x = v_\theta(x,t)$ from $t=\tau$ to $t=1$
using any ODE integrator of choice; in our experiments we use forward
Euler with $n_{\mathrm{steps}}$ steps (Table~\ref{tab:nfe_budget_sitrepa}),
but Runge--Kutta variants, DPM-Solver, or any other integrator would
equally well realize the same flow map. The total cost per probe iteration
is therefore $1$ NFE for a direct solution map and $n_{\mathrm{steps}}$
NFEs for a numerically integrated velocity field.

\begin{algorithm}[h]
\caption{One step of the Predictor--Corrector sampler.}
\label{alg:predictor_corrector_step}
\begin{algorithmic}[1]
\REQUIRE current state $x_j$, bridge time $\tau \in (0,1)$, flow map $\Phi_{\theta,\tau\to 1}$
\STATE \textbf{Predictor:} draw $z \sim \mathcal{N}(0, I)$ and set
       $\hat{x} \;\gets\; (1-\tau)\,z + \tau\, x_j$.
       \hfill Equation \eqref{eq:pc_predictor_short}
\STATE \textbf{Corrector:} set $x_{j+1} \;\gets\; \Phi_{\theta,\tau\to 1}(\hat{x})$.
\ENSURE $x_{j+1}$
\end{algorithmic}
\end{algorithm}

\subsection{Derivation of the Trapezoidal Acceptance Ratio}
\label{app:trap_derivation}

We give the cancellation argument that takes the trapezoidal approximation of the line integral in Equation \eqref{eq:denoiser_trap_rewrite} into the simple two-evaluation form used in Equation \eqref{eq:trap_acceptance_simplified_rewrite}.

Recall that the corrected log-acceptance ratio is
\begin{equation}
\log r_{D_\sigma}(x, y) \;=\; \Delta_{D_\sigma}(x, y) + \log q_\sigma(x \mid y) - \log q_\sigma(y \mid x),
\label{eq:app_log_r_recall}
\end{equation}
with proposal density $q_\sigma(y \mid x) = \mathcal{N}\!\bigl(y;\, D_\sigma(x),\, 2\sigma^2 I\bigr)$. The Gaussian proposal log-ratio is
\begin{equation}
\log q_\sigma(x \mid y) - \log q_\sigma(y \mid x)
\;=\;
\frac{1}{4\sigma^2}\Bigl(\bigl\lVert y - D_\sigma(x)\bigr\rVert^2 - \bigl\lVert x - D_\sigma(y)\bigr\rVert^2\Bigr).
\label{eq:app_q_ratio}
\end{equation}
Writing $r_\sigma(x) := D_\sigma(x) - x$, expand each squared norm:
\begin{align}
\lVert y - D_\sigma(x)\rVert^2 &= \lVert (y - x) - r_\sigma(x)\rVert^2 \;=\; \lVert y-x\rVert^2 - 2\,\langle y-x,\, r_\sigma(x)\rangle + \lVert r_\sigma(x)\rVert^2, \\
\lVert x - D_\sigma(y)\rVert^2 &= \lVert (x - y) - r_\sigma(y)\rVert^2 \;=\; \lVert y-x\rVert^2 - 2\,\langle x-y,\, r_\sigma(y)\rangle + \lVert r_\sigma(y)\rVert^2.
\end{align}
Subtracting and using $\langle x-y,\,r_\sigma(y)\rangle = -\langle y-x,\,r_\sigma(y)\rangle$,
\begin{equation}
\lVert y - D_\sigma(x)\rVert^2 - \lVert x - D_\sigma(y)\rVert^2
\;=\;
-2\,\langle y - x,\, r_\sigma(x) + r_\sigma(y)\rangle + \lVert r_\sigma(x)\rVert^2 - \lVert r_\sigma(y)\rVert^2.
\label{eq:app_norm_diff}
\end{equation}
Substituting Equation \eqref{eq:app_norm_diff} into Equation \eqref{eq:app_q_ratio} and combining with the trapezoidal approximation
\(
\Delta_{D_\sigma}(x, y) \approx \tfrac{1}{2\sigma^2}\,\langle r_\sigma(x) + r_\sigma(y),\, y - x\rangle,
\)
the inner-product terms cancel and only the squared-residual difference survives:
\begin{equation}
\log r_{D_\sigma}(x, y) \;\approx\; \frac{1}{4\sigma^2}\Bigl(\lVert D_\sigma(x) - x\rVert^2 - \lVert D_\sigma(y) - y\rVert^2\Bigr),
\end{equation}
which is Equation \eqref{eq:trap_acceptance_simplified_rewrite}. The sampler therefore needs only two evaluations of $D_\sigma$ per step, one at the current state and one at the proposal.

\FloatBarrier

%==============================================================================
\section{Experimental Details}
\label{app:experimental_details}

\subsection{Datasets}

\textbf{Oxford Flowers-102.}
We use the Oxford 102 Category Flower dataset~\citep{nilsback2008automated}, which comprises 8{,}189 images spanning 102 fine-grained flower species. All three official splits (train, validation, and test) are merged into a single evaluation pool, following prior unconditional generation work. All images are resized to $256 \times 256$ pixels.

\textbf{ImageNet-256.}
For large-scale experiments we use ImageNet 2012~\citep{deng2009imagenet} at $256 \times 256$ resolution with 1{,}000 classes. Generation operates in the latent space of a variational autoencoder~\citep{rombach2022high}, following the convention of recent latent generative models such as REPA~\citep{yu2024representation}.

\subsection{Generative Models}

All three backbones share the same variational autoencoder \citep{rombach2022high}, a KL-regularized encoder--decoder trained to produce 4-channel latents at $1/8$ the spatial resolution of the input image (i.e.\ $32 \times 32$ for $256 \times 256$ inputs). Latents are scaled by $\lambda = 0.18215$ before and after the generative model, following the convention of the Stable Diffusion codebase.

\subsubsection{Oxford Flowers-102}
\label{sec:flowers_unet}

For the Flowers-102 experiments we train a latent flow matching model from scratch using a U-Net~\citep{ronneberger2015u} that operates directly on the 4-channel latent tensors. The architecture is summarized in Table~\ref{tab:unet_arch}. The model is trained with a cosine-path interpolant
\begin{equation}
  x_t = \cos\!\left(\tfrac{\pi}{2}t\right)\,x + \sin\!\left(\tfrac{\pi}{2}t\right)\,\varepsilon,
  \qquad x \sim p,\; \varepsilon \sim \mathcal{N}(0,I),
  \label{eq:cosine_bridge}
\end{equation}
so that $t=0$ corresponds to pure data and $t=1$ to pure noise. Generation from noise integrates the learned velocity backward from $t=1$ to $t=0$ using the second-order Runge--Kutta (RK2) integrator~\citep{butcher2016numerical}.

\begin{table}[h]
\centering
\caption{Flow Matching U-Net architecture (Flowers-102).}
\label{tab:unet_arch}
\begin{tabular}{ll}
\toprule
Hyperparameter & Value \\
\midrule
Latent channels          & 4 \\
Latent resolution        & $32 \times 32$ \\
Residual blocks per level & 3 \\
Channel multipliers      & $[1,\,2,\,2,\,2]$ \\
Attention heads          & 4 \\
Attention resolutions    & $16 \times 16$ \\
Dropout                  & 0.1 \\
Interpolant              & Cosine path \\
ODE integrator           & RK2, $N_{\mathrm{ode}}=10$ \\
\bottomrule
\end{tabular}
\end{table}

\subsubsection{ImageNet-256}
\label{sec:soflow}

SoFlow~\citep{luo2025soflow} is a class-conditional latent flow matching model built on the DiT-XL/2 backbone~\citep{peebles2023scalable}. We use the publicly available checkpoint trained for $1.2 \times 10^6$ gradient steps with the EMA model weights. The model uses a linear interpolant
\begin{equation}
  x_t = (1-t)\,z + t\,x, \qquad t \in [0,1],
\end{equation}
with the Euler coefficient type, operating on $32 \times 32$ 4-channel latents and conditioning on 1{,}000 ImageNet class labels.

We additionally use the SiT-XL/2 checkpoint trained with REPA~\citep{yu2024representation}, a training objective that aligns a DiT's internal representations with a frozen vision encoder. We generate $50{,}000$ samples and evaluate using the REPA evaluator script. For both backbones, all generations reported in this paper are produced without classifier-free guidance.

\subsection{Implementation Details}
\label{sec:impl_details}

All experiments for Oxford Flowers-102 ran on a single NVIDIA RTX 4090 GPU. ImageNet-256 experiments used two NVIDIA RTX 4090 GPUs and one H100 GPU. We did not retrain or finetune any of the pretrained ImageNet backbones (SoFlow-XL/2-cond, SiT-REPA-XL); the 50{,}000-sample evaluation runs are the only ImageNet-scale compute consumed by our framework.

\FloatBarrier

%==============================================================================
\section{Additional Experimental Results}
\label{app:additional_results}

\subsection{Oxford Flowers-102}
\label{app:flowers_results}

We evaluate our approach on the Oxford Flower-102 dataset~\citep{nilsback2008automated} (training details are in \ref{sec:flowers_unet}). Table~\ref{tab:fm_probe_metrics_flowers} compares the Predictor--Corrector and dMALA samplers against a standard flow-matching baseline generated from Gaussian noise, reporting Frechet Inception Distance (FID), Precision, and Recall across step counts. 

\begin{table}[h]
\centering
\caption{FID, Precision, and Recall on Oxford Flowers-102, with $t_{\text{noise}}=0.1$.}
\label{tab:fm_probe_metrics_flowers}
\begin{tabular}{lcccc}
\toprule
\textbf{Method} & \textbf{Steps} &  \textbf{FID} $\downarrow$ & \textbf{Precision} $\uparrow$ & \textbf{Recall} $\uparrow$ \\
\midrule
\textcolor{gray}{Baseline FM (noise $\to$ data)} & \textcolor{gray}{--} & \textcolor{gray}{23.95} & \textcolor{gray}{0.5213} & \textcolor{gray}{0.2761} \\
\midrule
Predictor--Corrector & 20 & 12.50 & 0.6013 & 0.6962 \\
                     & 40 & 15.90 & 0.4676 & 0.6313 \\
                     & 60 & 19.92 & 0.3859 & 0.5848 \\
                     & 80 & 25.38 & 0.3091 & 0.5524 \\
\midrule
dMALA  & 20 & \textbf{10.92} & \textbf{0.6967} & \textbf{0.7825} \\
 & 40 & 12.64 & 0.5940 & 0.6936 \\
                      & 60 & 14.45 & 0.5179 & 0.6537 \\
& 80 & 16.34 & 0.4587 &  0.6210 \\
                     
\bottomrule
\end{tabular}
\end{table}

\FloatBarrier

\subsection{ImageNet-256: dMALA Ablation}
\label{app:dm_soflow_ablation}

Table~\ref{tab:dm_soflow} reports the dMALA sampler on ImageNet-256 with the SoFlow-XL/2-cond backbone, sweeping the smoothing level $\sigma$ at $\sigma \in \{0.3,\,0.5\}$ and the number of Metropolis--Hastings iterations $K$. Smaller $\sigma$ raises the acceptance rate and yields lower FID; larger $\sigma$ trades FID against higher Inception Score and Precision, at the cost of Recall. 

\begin{table}[h]
\centering
\small
\setlength{\tabcolsep}{4pt}
\caption{dMALA on ImageNet-256 with SoFlow-XL/2-cond. The denoiser uses SoFlow's solution operator with noise level $\sigma$. $K$ denotes the number of Metropolis--Hastings iterations; Acc.\ rate is the running mean acceptance probability. Metrics are computed on $50{,}000$ samples against the ADM reference batch. Reference baseline shown in gray.}
\label{tab:dm_soflow}
\begin{tabular}{@{}lccccccc@{}}
\toprule
 & $K$ & Acc.\ rate & FID $\downarrow$ & sFID $\downarrow$ & IS $\uparrow$
 & Precision $\uparrow$ & Recall $\uparrow$ \\
\midrule
\textcolor{gray}{Baseline (1-NFE)}
 & \textcolor{gray}{--} & \textcolor{gray}{--} & \textcolor{gray}{2.96} & \textcolor{gray}{4.60} & \textcolor{gray}{238.53} & \textcolor{gray}{0.80} & \textcolor{gray}{0.56} \\
\addlinespace[4pt]
$\sigma = 0.5$
 & 50  & 0.68 & 5.50 & 6.57  & 356.05 & 0.87 & 0.47 \\
 & 100 & 0.49 & 7.28 & 8.18  & 367.45 & 0.89 & 0.42 \\
 & 150 & 0.49 & 8.29 & 9.49  & 372.80 & 0.89 & 0.39 \\
 & 200 & 0.48 & 9.02 & 10.32 & 377.30 & 0.90 & 0.38 \\
\addlinespace[4pt]
$\sigma = 0.3$
 & 50  & 0.76 & 3.23 & 7.72  & 313.29 & 0.81 & 0.57 \\
 & 100 & 0.54 & 4.34 & 10.40 & 308.11 & 0.82 & 0.52 \\
 & 150 & 0.53 & 5.07 & 12.45 & 301.23 & 0.81 & 0.49 \\
 & 200 & 0.53 & 5.61 & 14.26 & 294.47 & 0.81 & 0.47 \\
\bottomrule
\end{tabular}
\end{table}

\FloatBarrier

\subsection{Computational Efficiency}
\label{app:nfe_efficiency}

In the next experiment we test the computational efficiency of the methods and compare them for a transport generation that has a similar budget. Results are shown in Table~\ref{tab:nfe_budget_sitrepa}.
\begin{table*}[h]
\centering
\small
\setlength{\tabcolsep}{4pt}
\caption{Predictor--corrector sampling matches or beats direct flow
integration at equal or smaller compute on ImageNet-256 with
SiT-REPA-XL. The baseline rows integrate the learned velocity field
$v_\theta$ from $t{=}1$ to $t{=}0$ using forward Euler with the
indicated number of steps. Our method runs the
Algorithm~\ref{alg:predictor_corrector_step} sampler for $K$
predictor--corrector iterations; the corrector inside each iteration
integrates $v_\theta$ from $t{=}\tau$ to $t{=}1$ using forward Euler
with $n_{\text{steps}}$ steps, so the total network function
evaluations satisfy $\mathrm{NFE}=n_{\text{steps}}\cdot K$. All our
methods use $t_{\text{noise}}{=}0.5$. Metrics are computed on
$50{,}000$ samples against the ADM reference batch.}
\label{tab:nfe_budget_sitrepa}
\begin{tabular}{@{}lcccccc@{}}
\toprule
 & NFE & FID $\downarrow$ & sFID $\downarrow$ & IS $\uparrow$
 & Precision $\uparrow$ & Recall $\uparrow$ \\
\midrule
%\multicolumn{7}{l}{\textit{Baseline budget: 400 Euler steps}} \\
  \textcolor{gray}{Baseline (400 Euler steps)} & \textcolor{gray}{400} & \textcolor{gray}{6.61} & \textcolor{gray}{5.62}  & \textcolor{gray}{152.52} & \textcolor{gray}{0.68} & \textcolor{gray}{0.69} \\
  \textcolor{gray}{Baseline (800 Euler steps)}
  & \textcolor{gray}{800} & \textcolor{gray}{6.51} & \textcolor{gray}{5.58} & \textcolor{gray}{153.08} & \textcolor{gray}{0.68} & \textcolor{gray}{0.69} \\
\hline
Pred.--Corr $n_{\text{steps}}{=}10$, $K{=}20$ (ours)
  & 200 & \textbf{4.91} & 12.07 & 183.82 & 0.70 & 0.66 \\
Pred.--Corr $n_{\text{steps}}{=}20$, $K{=}20$ (ours)
  & 400 & \textbf{3.06} & 6.24  & 199.50 & 0.72 & 0.69 \\
\bottomrule
\end{tabular}
\end{table*}

%==============================================================================
% \section{Limitations}
% \label{app:limitations}

% The effectiveness of both the Denoiser-Metropolis (dMALA) and Predictor--Corrector mechanisms is bound by the quality of the underlying pretrained flow or score-based model. Although the ideal kernels are probability-preserving in their respective ideal-model settings, approximation to the score  model can accumulate errors over repeated applications of the kernel. We explicitly characterize this behavior in \Cref{tab:long_chain_soflow_combined}. Our empirical validation is also restricted to continuous benchmarks (Swiss-roll, Oxford Flowers-102, ImageNet-256). Extending this paradigm to discrete domains such as text generation, where bridge models have only recently emerged, is left to future work.

%==============================================================================
\section{Broader Impact}
\label{app:broader_impact}

This work develops a sampling framework that operates on top of existing pretrained flow-matching models. Two implications follow. On the positive side, because our framework re-uses existing checkpoints rather than training new generative models, it can reduce the energy and compute footprint of deploying generative systems, and pairs naturally with retrieval-augmented pipelines that aim to ground generated content in reliable source material. On the negative side, because the framework inherits the behavior of the underlying pretrained model, it also inherits whatever biases, representational gaps, or potential for misuse the underlying model carries, including the risk of being applied to produce deepfakes or targeted misinformation. Practitioners deploying this framework should treat the underlying model's documentation and known failure modes as a baseline that the sampling framework does not modify.

%==============================================================================

\section{Sample Visualizations}
\label{app:visualizations}

We provide visualizations of both the Predictor--Corrector and the dMALA samplers on the ImageNet256 dataset \citep{deng2009imagenet}. \Cref{fig:combined_imagenet_app} presents a combined high-resolution overview comparing both methods across the SoFlow-XL/2 and SiT-XL/2 models over 200 iterations. 

To examine the short-term behavior of the samplers, \Cref{fig:denoiser_20steps} and \Cref{fig:probe_20steps} illustrate the original images alongside samples generated after a 20-step run using dMALA and Predictor--Corrector, respectively. 

Furthermore, we provide extended long-run visualizations for Predictor--Corrector in \Cref{fig:app_probe_200} and for dMALA in \Cref{fig:app_denoiser_200}. In these extended subfigures, the first row displays the original image (before any iterations), and each subsequent row corresponds to an additional 50 steps, reaching 200 steps in the final row. Across all experiments, both methods successfully preserve the class identity throughout the chain while progressively moving the sample along the data manifold.
To further illustrate the step-by-step progression of the Predictor--Corrector across longer chains, \Cref{fig:imagenet_fm_repa} and \Cref{fig:imagenet_fm_soflow} present generated samples using the SiT-REPA-XL and SoFlow-XL/2-cond backbones, respectively. In these figures, the leftmost column presents the original image, while the subsequent columns track the continuous evolution of the state across steps.

\subsection{ImageNet-256}
\label{app:high_resolution_imagenet}

\begin{figure}[h]
    \centering
    \includegraphics[width=\linewidth]{Figures/imagenet_images_for_paper/combined_imagenet_all_methods_new.pdf}
    \caption{Visualization of ImageNet256 for two methods
             (Predictor--Corrector and dMALA) and two models (SoFlow-XL/2 and
             REPA SiT-XL/2). The first column shows the original image;
             each subsequent column shows the sample after an additional
             50 iterations, up to 200 total.}
    \label{fig:combined_imagenet_app}
\end{figure}

% ── Denoiser MALA — 20 steps ──────────────────────────────────────────────────
\begin{figure}[h]
    \centering
    \begin{subfigure}[h]{0.48\textwidth}
        \centering
        \includegraphics[width=\textwidth]{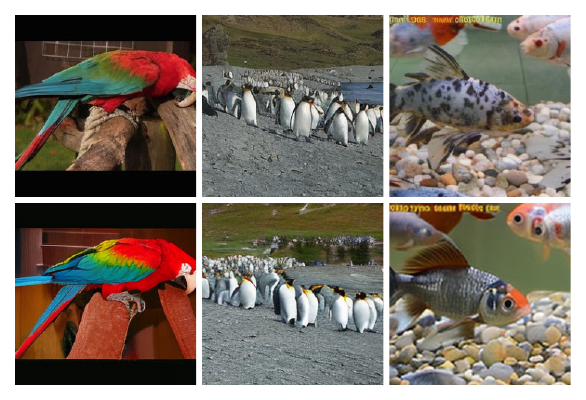}
        \caption{SoFlow-XL/2 }
    \end{subfigure}
    \hfill
    \begin{subfigure}[h]{0.48\textwidth}
        \centering
        \includegraphics[width=\textwidth]{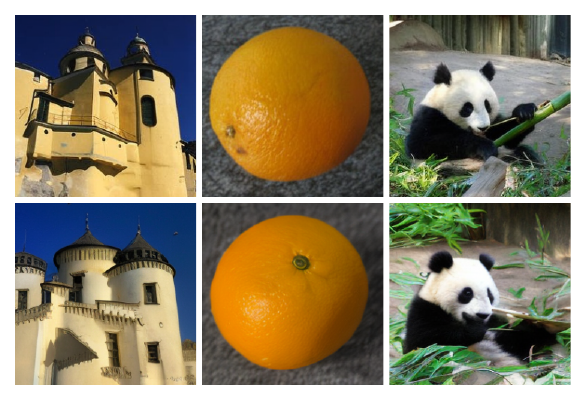}
        \caption{SiT-XL/2 }
    \end{subfigure}
    \caption{dMALA (20 steps). The first row shows the original image;
             the second row shows the generated sample.}
    \label{fig:denoiser_20steps}
\end{figure}

% ── FM Probe — 20 steps ───────────────────────────────────────────────────────
\begin{figure}[h]
    \centering
    \begin{subfigure}[t]{0.48\textwidth}
        \centering
        \includegraphics[width=\textwidth]{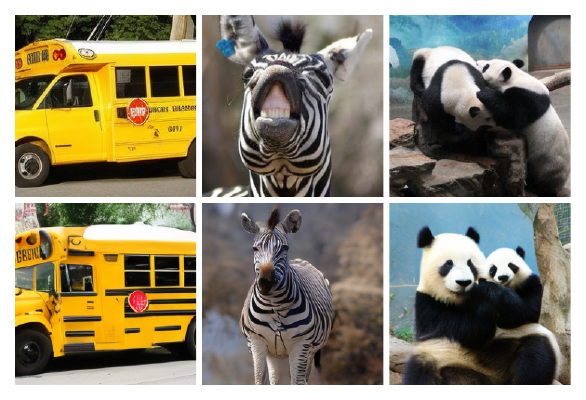}
        \caption{SoFlow-XL/2}
    \end{subfigure}
    \hfill
    \begin{subfigure}[t]{0.48\textwidth}
        \centering
        \includegraphics[width=\textwidth]{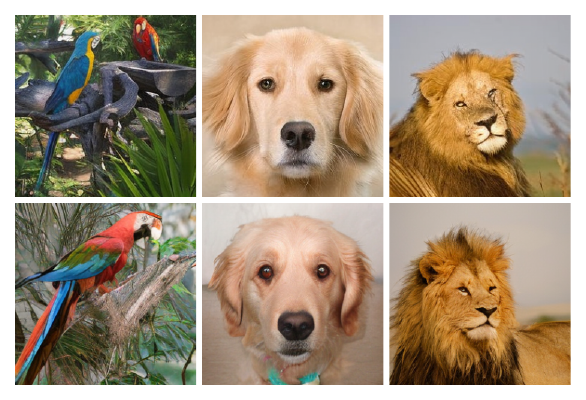}
        \caption{SiT-XL/2}
    \end{subfigure}
    \caption{Pred.--Corr. (20 steps). The first row shows the original image;
             the second row shows the generated sample.}
    \label{fig:probe_20steps}
\end{figure}

% We provide extended visualizations of both the Predictor--Corrector and the dMALA
% sampler run for long runs on ImageNet256 \citep{deng2009imagenet}.
% Figures~\ref{fig:app_probe_200} and~\ref{fig:app_denoiser_200} show results
% for SoFlow-XL/2 and SiT-XL/2 respectively.
% In each subfigure the first row is the original image (before any iterations),
% and each subsequent row corresponds to an additional 50 steps,
% reaching 200 steps in the final row.
% Both methods preserve the class identity throughout the chain while
% progressively moving the sample along the data manifold.

% ── FM Probe — 200 steps ─────────────────────────────────────────────────────
\begin{figure}[h]
    \centering
    \begin{subfigure}[t]{0.48\textwidth}
        \centering
        \includegraphics[width=\textwidth]{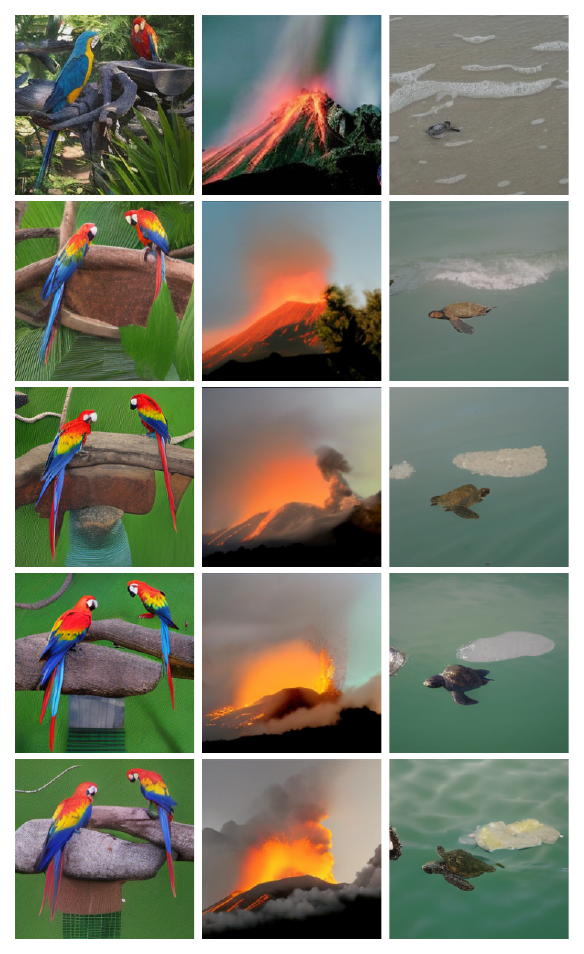}
        \caption{SoFlow-XL/2 }
    \end{subfigure}
    \hfill
    \begin{subfigure}[t]{0.48\textwidth}
        \centering
        \includegraphics[width=\textwidth]{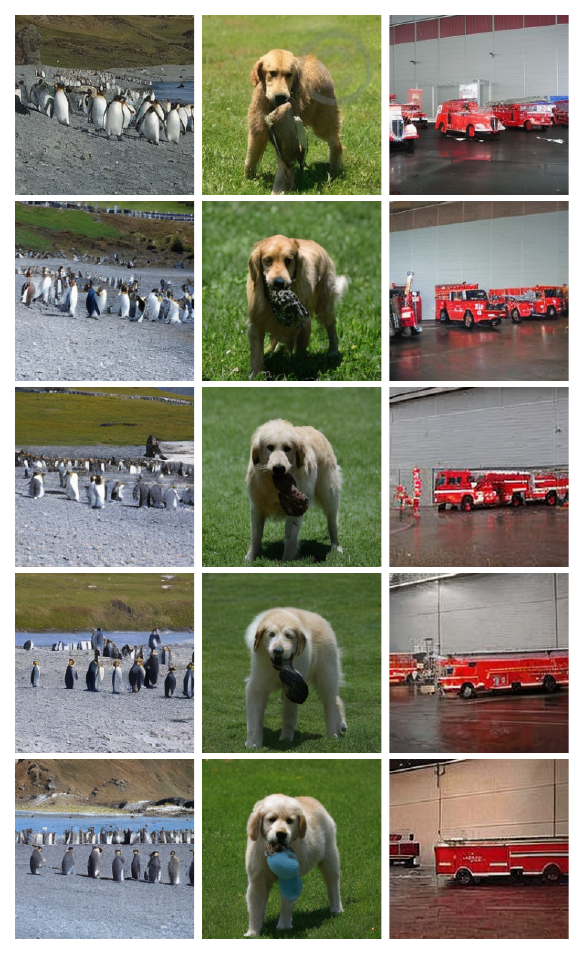}
        \caption{SiT-XL/2 }
    \end{subfigure}
    \caption{Predictor--Corrector on ImageNet256 over 200 iterations.
             The first row shows the original image; each subsequent row shows
             the sample after an additional 50 steps, up to a total of 200 steps.}
    \label{fig:app_probe_200}
\end{figure}

% ── Denoiser MALA — 200 steps ────────────────────────────────────────────────
\begin{figure}[h]
    \centering
    \begin{subfigure}[t]{0.48\textwidth}
        \centering
        \includegraphics[width=\textwidth]{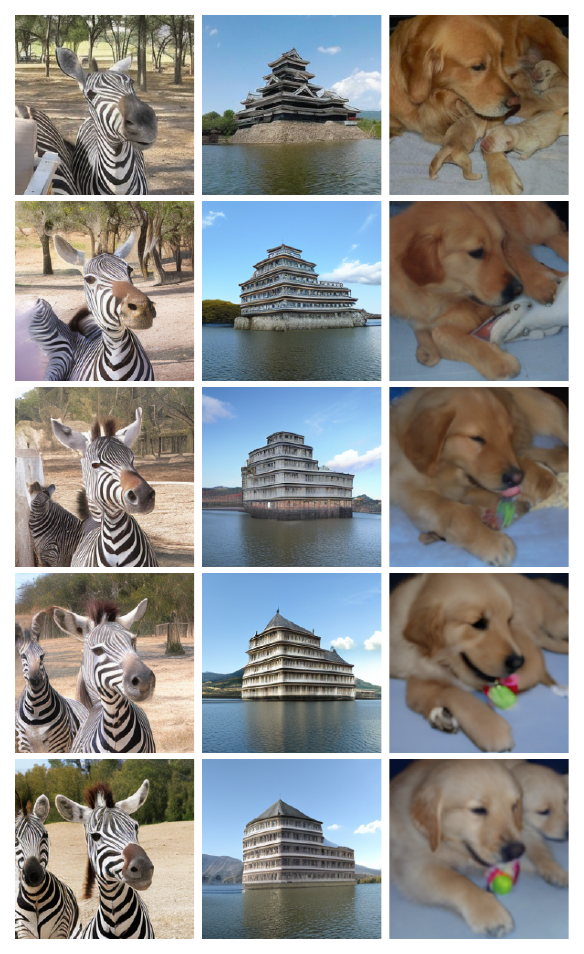}
        \caption{SoFlow-XL/2}
    \end{subfigure}
    \hfill
    \begin{subfigure}[t]{0.48\textwidth}
        \centering
        \includegraphics[width=\textwidth]{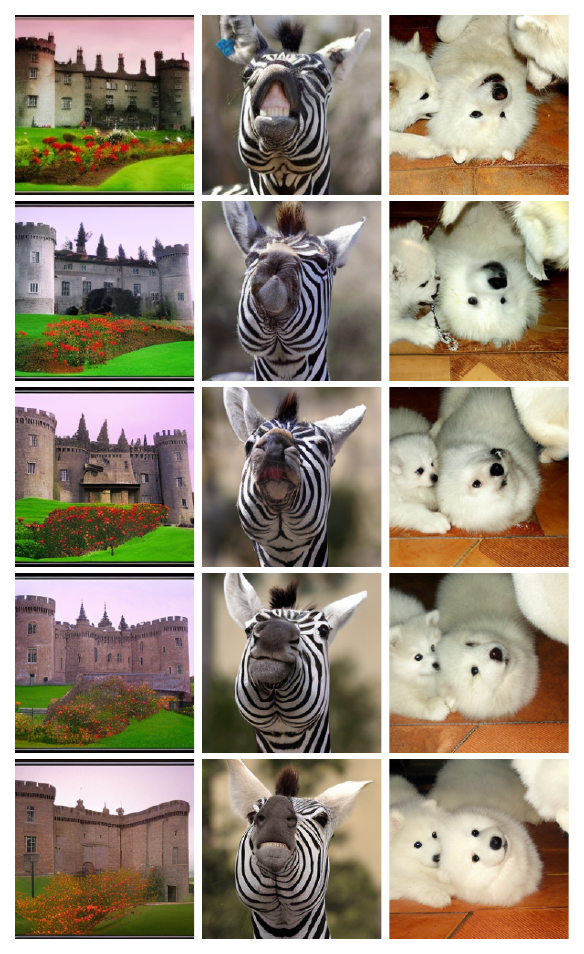}
        \caption{SiT-XL/2}
    \end{subfigure}
    \caption{dMALA on ImageNet256 over 200 iterations.
             The first row shows the original image; each subsequent row shows
             the sample after an additional 50 steps, up to a total of 200 steps.}
    \label{fig:app_denoiser_200}
\end{figure}

\begin{figure}[h]
    \centering
    \includegraphics[width=\textwidth]{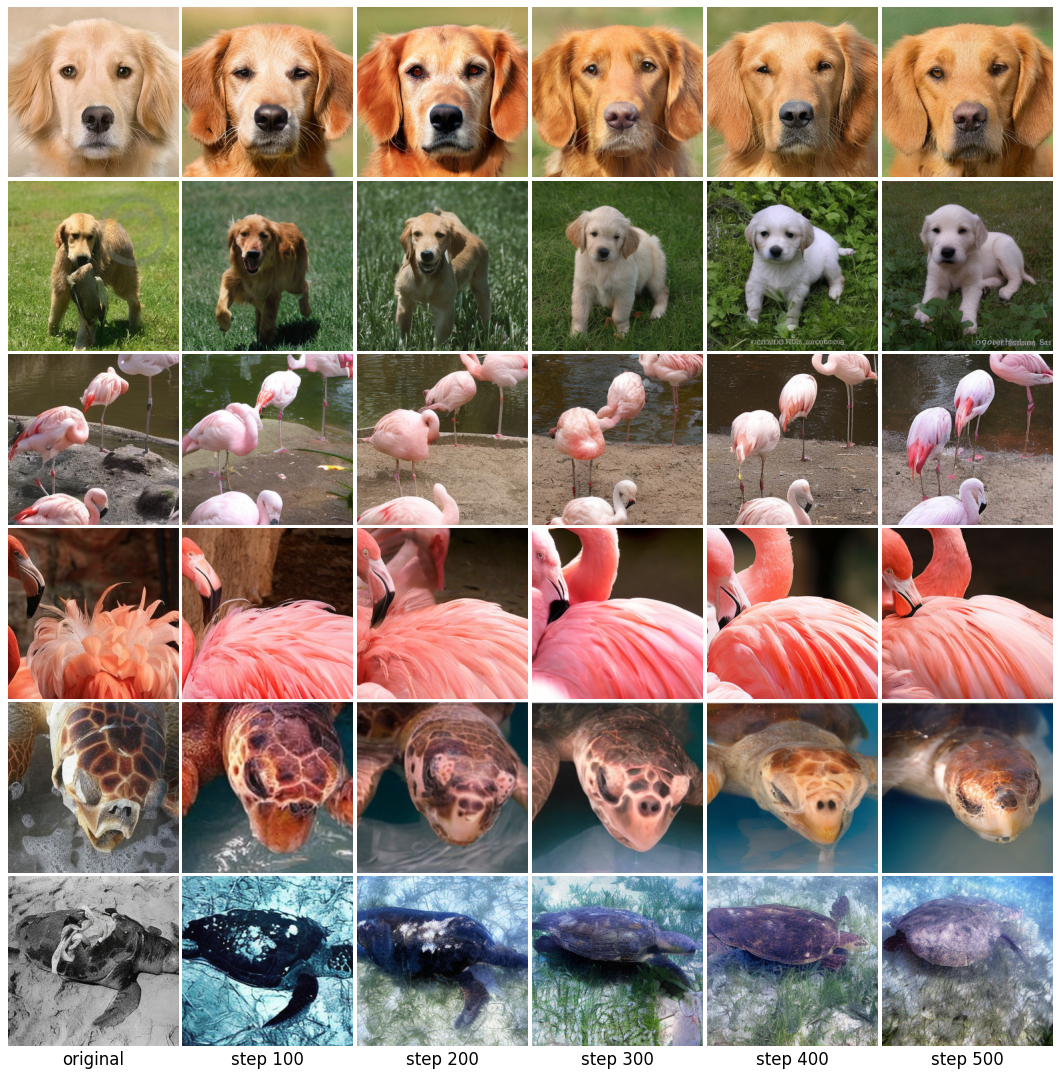}
    \caption{Pred.-Corr. ImageNet-256 samples generated by with the SiT-REPA-XL backbone. The leftmost column shows the original image; subsequent columns show the state after the indicated number of probe iterations.}
    \label{fig:imagenet_fm_repa}
\end{figure}

\begin{figure}[h]
    \centering
    \includegraphics[width=\textwidth]{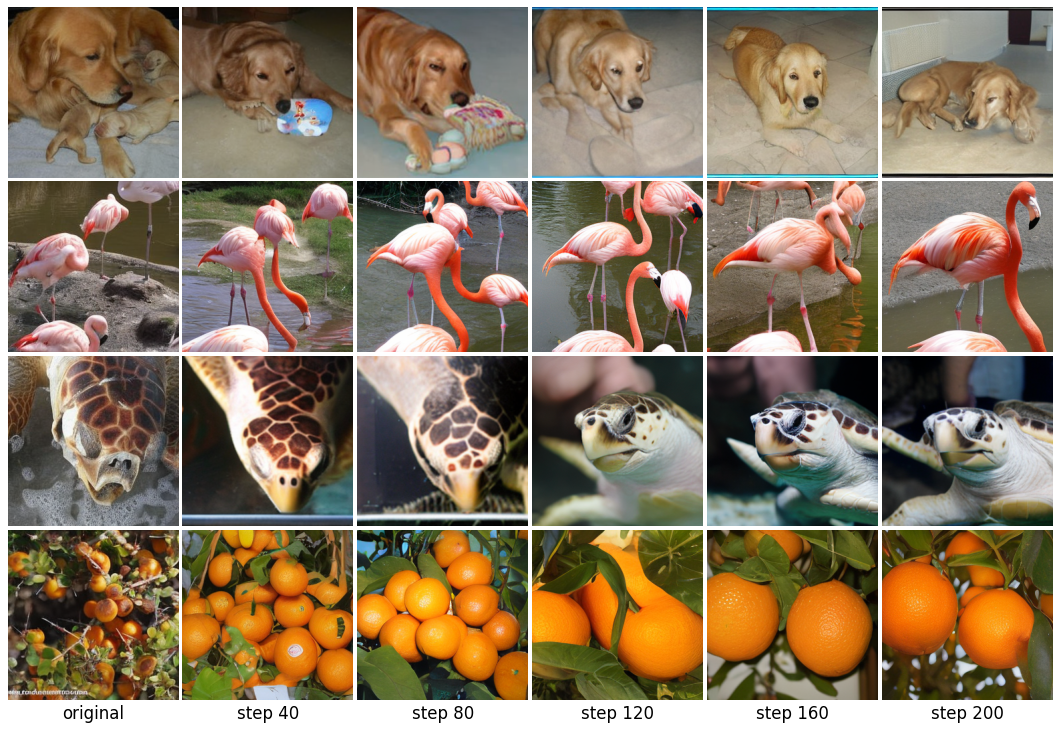}
    \caption{Pred.-Corr. ImageNet-256 samples generated by with the SoFlow-XL/2-cond backbone. The leftmost column shows the original image; subsequent columns show the state after the indicated number of probe iterations.}
    \label{fig:imagenet_fm_soflow}
\end{figure}

\subsection{Oxford Flowers-102 Visualization}
\label{app:flowers}

We further validate our approach on the Oxford Flowers dataset \citep{nilsback2008automated}.
Figure~\ref{fig:app_flowers_probe} 
shows results for the Predictor--Corrector and dMALA.
In each subfigure the first row shows the original image, and each
subsequent row shows the sample after an additional 20 iterations.

\begin{figure}[h]
    \centering
    \begin{subfigure}[t]{0.48\textwidth}
        \centering
        \includegraphics[width=\textwidth]{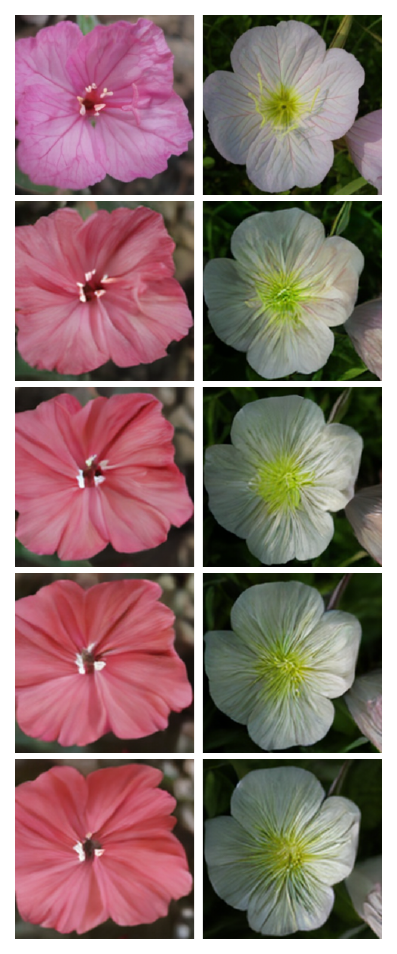}
        \caption{Predictor--Corrector}
    \end{subfigure}
    \hfill
    \begin{subfigure}[t]{0.48\textwidth}
        \centering
        \includegraphics[width=\textwidth]{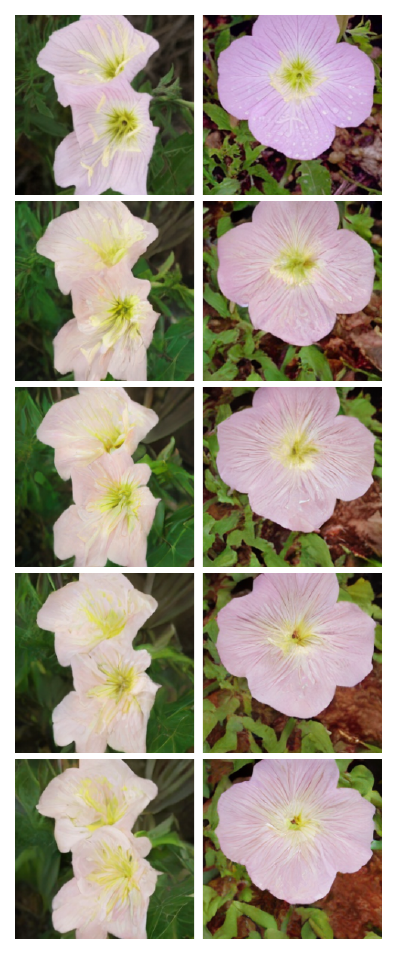}
        \caption{dMALA}
    \end{subfigure}
    \caption{Predictor--Corrector and dMALA on Oxford Flowers-102 over 80 iterations.
             The first row shows the original image; each subsequent row shows
             the sample after an additional 20 iterations, up to a total of 80 steps.}
    \label{fig:app_flowers_probe}
\end{figure}

\end{document}